\newcommand{\ie}{\textit{i}.\textit{e}.,}
\newcommand{\relu}{\text{ReLU}}
\newtheorem*{remark}{Remark}
\newcounter{subassumption}[asu]
\def\eq{\textbf{=}}
\newtheorem{theorem}{Theorem}
\newtheorem{lemma}{Lemma}
\newtheorem{definition}{Definition}
\newtheorem{proposition}{Proposition}
\newcommand{\norm}[1]{\left\|#1\right\|}
\newcommand{\abs}[1]{\left|#1\right|}
\newcommand{\R}{\mathbb{R}}
\newcommand{\pr}{\mathbb{P}}
\newcommand{\iden}{\mathbb{I}}
\def\1{\bm{1}}
\def\vZero{{\bm{0}}}
\def\vOne{{\bm{1}}}
\def\va{{\bm{a}}}
\def\vb{{\bm{b}}}
\def\vf{{\bm{f}}}
\def\vg{{\bm{g}}}
\def\vp{{\bm{p}}}
\def\vs{{\bm{s}}}
\def\vv{{\bm{v}}}
\def\vx{{\bm{x}}}
\def\vy{{\bm{y}}}
\def\vz{{\bm{z}}}
\def\mW{{\bm{W}}}
\def\mX{{\bm{X}}}
\def\mZero{{\bm{0}}}
\DeclareMathAlphabet{\mathsfit}{\encodingdefault}{\sfdefault}{m}{sl}
\SetMathAlphabet{\mathsfit}{bold}{\encodingdefault}{\sfdefault}{bx}{n}
\def\cD{{\mathcal{D}}}
\def\cO{{\mathcal{O}}}
\def\cV{{\mathcal{V}}}
\def\bOne{{\mathbbm{1}}}
\crefname{enumi}{example}{examples}
\newtcolorbox{highlight}[1][]{
    enhanced,
    colback=yellow!10,
    colframe=gray!30,
    boxrule=0.5pt,
    arc=2pt,
    leftrule=3pt,
    rightrule=3pt,
    toprule=1pt,
    bottomrule=1pt,
    breakable,
    #1
}
\def\blfootnote{\xdef\@thefnmark{}\@footnotetext}
\title{
    Everything Everywhere All at Once:\\ LLMs can In-Context Learn\\ Multiple Tasks in Superposition}
\author{Zheyang Xiong$^w$, Ziyang Cai$^w$, John Cooper$^w$, Albert Ge$^w$, Vasilis Papageorgiou$^w$ \\ \textbf{Zack Sifakis}$^w$, \textbf{Angeliki Giannou}$^w$, \textbf{Ziqian Lin}$^w$, \textbf{Liu Yang}$^w$, \textbf{Saurabh Agarwal}$^w$ \\ \textbf{Grigorios G Chrysos}$^w$, \textbf{Samet Oymak}$^m$, \textbf{Kangwook Lee}$^w$, \textbf{Dimitris Papailiopoulos}$^{w,ms}$ \\ \\
\textnormal{$^{w}$University of Wisconsin-Madison},
\textnormal{$^{m}$University of Michigan},
\textnormal{$^{ms}$Microsoft Research}
}
\begin{document}

\maketitle

\begin{abstract}
Large Language Models (LLMs) have demonstrated remarkable in-context learning (ICL) capabilities. In this study, we explore a surprising phenomenon related to ICL: LLMs can perform multiple, computationally distinct ICL tasks simultaneously, during a single inference call, a capability we term ``task superposition''. We provide empirical evidence of this phenomenon across various LLM families and scales and show that this phenomenon emerges even if we train the model to in-context learn one task at a time. We offer theoretical explanations that this capability is well within the expressive power of transformers. We also explore how LLMs internally compose task vectors during superposition. Furthermore, we show that larger models can solve more ICL tasks in parallel, and better calibrate their output distribution. Our findings offer insights into the latent capabilities of LLMs, further substantiate the perspective of ``LLMs as superposition of simulators'', and raise questions about the mechanisms enabling simultaneous task execution.
\end{abstract}

\blfootnote{Email: \texttt{<zheyang@cs.wisc.edu>}. Correspondence: \texttt{<dimitris@papail.io>}.}
\blfootnote{Code available at: \texttt{github.com/edixiong/task-superposition}}

\section{Introduction} \label{intro}

Large Language Models (LLMs) have demonstrated remarkable capabilities across various domains, with one of the most intriguing being in-context learning (ICL). ICL enables LLMs to perform  tasks during inference without the need to fine-tune for that particular task, simply by providing a few examples within the input prompt. This ability has sparked significant interest in the research community, as it suggests that LLMs can adapt to novel tasks on-the-fly, using the capabilities that they acquired during pretraining, and the context provided.

While ICL has been extensively studied from both theoretical and empirical perspectives,
many aspects of its underlying mechanisms remain elusive. In this work, we study a surprising phenomenon related to ICL that, to the best of our knowledge, has not been thoroughly studied before: LLMs can perform multiple distinct ICL tasks simultaneously, in a single inference call, a capability we refer to as {\it ``task superposition''}, with examples shown in Figure~\ref{fig:fulldemo}.

Our study suggests that pretrained autoregressive LLMs such as Llama \citep{touvron2023llama} or GPT-3.5\footnote{In particular, \texttt{gpt-3.5-turbo-instruct}.} \citep{brown_language_2020} display superposition of tasks purely \emph{in-context}. When presented with multiple in-context examples from different tasks, in the same prompt, the models can generate outputs that correspond to solutions for all these individual tasks. For instance, given examples of addition and translation, the model can concurrently produce correct answers for both tasks, as well as the composition of these tasks (e.g., the result of addition translated into another language).

\begin{figure}[t!]
    \centering
    \begin{subfigure}[t]{1\textwidth}
        \centering
        \includegraphics[width=1\linewidth]{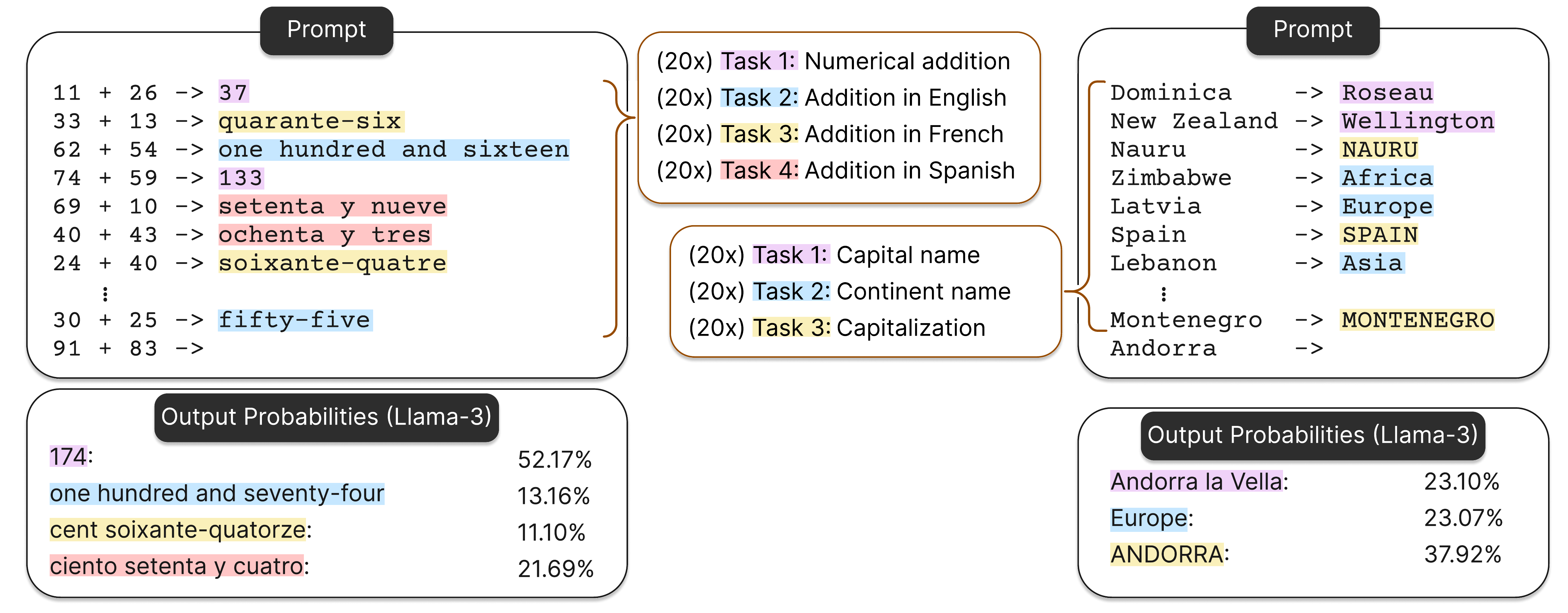}
        \caption{\textbf{(left)} Two-digit addition in a variety of languages. \textbf{(right)} Naming the capital of a given country name, naming the continent of a given country name or capitalizing the country name.}
        \label{fig:llama3_demo}
    \end{subfigure}
    \begin{subfigure}[t]{1\textwidth}
        \centering
        \includegraphics[width=1\linewidth]{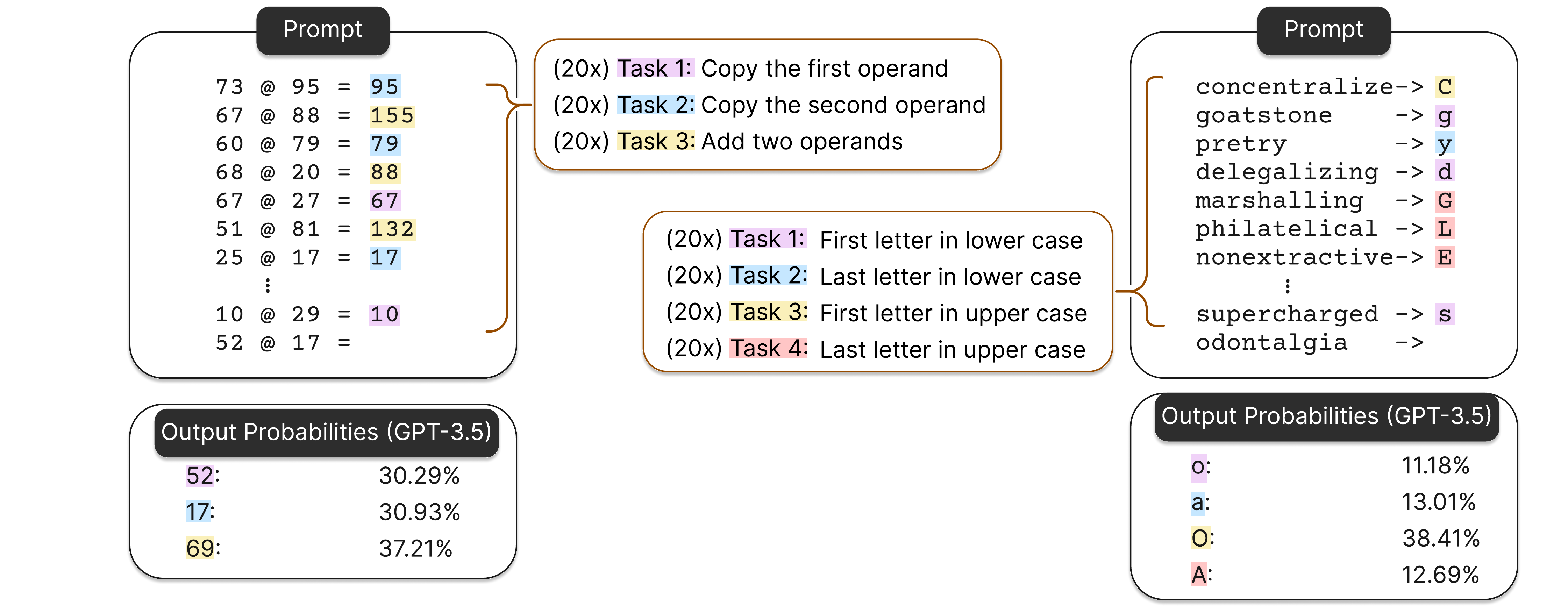}
        \caption{\textbf{(left)} Tasks \texttt{copy(op1)}, \texttt{copy(op2)} and \texttt{op1+op2}. \textbf{(right)} First or last letter in upper or lower case.}
        \label{fig:gpt3_demo}
    \end{subfigure}
    \caption{LLMs can perform task superposition. \textbf{(a)} Llama-3 70B and \textbf{(b)} GPT-3.5 Turbo are each presented with two sets of tasks. For each set of tasks, we show an example prompt such that all except the last row are in-context examples of one of the tasks and the last row is the query. We provide $20$ in-context task examples for each task in the prompt with order randomized and provide the probabilities of outputs when correctly performing each task on the query.
    }
    \label{fig:fulldemo}
\end{figure}

Figure \ref{fig:fulldemo} illustrates this phenomenon. In Figure \ref{fig:llama3_demo} (left), given in-context examples of addition in different languages and the query ``91 + 83 $\rightarrow$'', the model generates probabilities for the correct sum in various languages, demonstrating its ability to perform addition and translation concurrently.

This discovery aligns and  lends further support to  the view of LLMs as superposition of simulators \citep{janus,shanahan2023role,nardo2023the} and the Bayesian perspective of ICL proposed by \citet{Xie_Raghunathan_Liang_Ma_2022}. While not a mathematically rigorous formulation, we can conceptualize the output of an LLM as a weighted sum of conditional probabilities across possible tasks: 
\begin{equation*}
\pr(\mathsf{output}|\mathsf{prompt}) \approx \sum_{\mathsf{task}}    \pr(\mathsf{output}|\mathsf{task},\mathsf{prompt})\pr(\mathsf{task}|\mathsf{prompt}).
\end{equation*}
In this conceptual model, $\pr(\mathsf{output}|\mathsf{prompt})$ represents the probability distribution over possible outputs given the input prompt,
a $\mathsf{task}$ can be thought of as a latent variable representing different capabilities the model might possess (e.g., arithmetic, translation, sentiment analysis), $\pr(\mathsf{output}|\mathsf{task},\mathsf{prompt})$ represents the output probability distribution if the model was specifically attempting to solve a single task, based on the test example in the prompt, and $\pr(\mathsf{task}|\mathsf{prompt})$ represents the model's inferred probability that the prompt specifies a particular task. 

Although this mental model is an over-simplification of how an LLM operates, it offers a clean conceptual framework for the task superposition phenomenon we observe. Our findings lend support to the idea that LLMs can simultaneously maintain and utilize multiple task distributions, resulting in outputs that reflect a combination of relevant tasks.

\paragraph{Our Contributions:} 
Our study makes several key contributions:
\begin{enumerate}[topsep=-1pt,itemsep=-0.1ex,partopsep=1ex,parsep=1ex,leftmargin = 4ex]
\item Through extensive empirical investigation and theoretical results, we demonstrate that task superposition is prevalent across various pretrained LLM families (GPT3.5, LLama-3, Qwen).
\item We empirically show that task superposition can emerge, even if we train on one task at a time.
\item We provide a theoretical construction showing that Transformers models are indeed capable of task superposition, and have the capacity to implement multiple tasks in parallel.
\item We explore how LLMs internally compose task vectors \citep{hendel_-context_2023} during superposition, and show how convex combinations of task vectors can reproduce the superposition effect.
\item We show that larger models can solve more tasks in parallel and more accurately reflect the distribution of in-context tasks.
\end{enumerate}

We believe that our findings offer new insights into the latent capabilities of LLMs and raise  questions about the mechanisms enabling simultaneous task execution. We believe this work sheds more light on the ICL capabilities of frontier language models, and offers a glimpse on potential applications of task superposition in practical settings.

\section{Related Work} \label{relatedwork}

\textbf{Theory and practice of in-context learning.} There is rich literature which formalizes in-context learning under diverse definitions. For example, prior works study in-context learning through a Bayesian framework for task retrieval \citep{Xie_Raghunathan_Liang_Ma_2022, Panwar_Ahuja_Goyal_2023,Zhang_Zhang_Yang_Wang_2023}, martingales \citep{Falck_Wang_Holmes_2024}, optimizers \citep{akyurek_what_2023,oswald_transformers_2023,Dai_Sun_Dong_Hao_Sui_Wei_2022} and more \citep{Reddy_2024,Olsson_Elhage_Nanda_Joseph_DasSarma_Henighan_Mann_Askell_Bai_Chen_etal._2022}. Other works confirm the theoretical framing of in-context learning by using it to implement a variety of algorithms and methods \citep{Zhou_Bradley_Littwin_Razin_Saremi_Susskind_Bengio_Nakkiran_2023,Ahn_Cheng_Daneshmand_Sra_2023, Giannou_Rajput_Sohn_Lee_Lee_Papailiopoulos_2023, Wu_Zou_Chen_Braverman_Gu_Bartlett_2024, Laskin_Wang_Oh_Parisotto_Spencer_Steigerwald_Strouse_Hansen_Filos_Brooks_etal._2022,Zhou_Nova_Larochelle_Courville_Neyshabur_Sedghi_2022}, or to approximate general-purpose computing machines \citep{Giannou_Rajput_Sohn_Lee_Lee_Papailiopoulos_2023, Wei_Chen_Ma_2022}.

To bridge the gap between theory and practice, many works have used these theoretical insights to study in-context learning behaviors, such as in many-shot in-context learning, \citep{Agarwal_Singh_Zhang_Bohnet_Chan_Anand_Abbas_Nova_Co-Reyes_Chu_etal._2024}, long-context \citep{Li_Zhang_Do_Yue_Chen_2024}, or eliciting personas \citep{Choi_Li_2024}. Other works study the factors that influence how well models can learn through context, such as task diversity \citep{raventos2023effects,Chan_Santoro_Lampinen_Wang_Singh_Richemond_McClelland_Hill_2022}, the balance between pre-training priors and in-context \citep{Wei_Wei_Tay_Tran_Webson_Lu_Chen_Liu_Huang_Zhou_etal._2023,lin_dual_2024}, in-context labels \citep{Min_Lyu_Holtzman_Artetxe_Lewis_Hajishirzi_Zettlemoyer_2022,Lyu_Min_Beltagy_Zettlemoyer_Hajishirzi_2022}, and the in-context format \citep{Lampinen_Dasgupta_Chan_Matthewson_Tessler_Creswell_McClelland_Wang_Hill_2022}. In-context learning has also been proposed as a means of fine-tuning to improve non-language tasks \citep{Dinh_Zeng_Zhang_Lin_Rajput_Gira_Sohn_Papailiopoulos_Lee_2022}.

The development of new architectures such as state space models \citep{Gu_Dao_2023} has further motivated studying whether in-context learning is prevalent in alternative architectures such as Mamba \citep{Park_Park_Xiong_Lee_Cho_Oymak_Lee_Papailiopoulos_2024,Grazzi_Siems_Schrodi_Brox_Hutter_2024,Zeng_Kang_Chen_Koo_Lee_2024} or in looped transformers \citep{Yang_Lee_Nowak_Papailiopoulos_2023}.

Steering models through in-context learning has been a growing area of interest.
Recent work has hypothesized that in-context learning can be encapsulated by a high-dimensional description of a task, which can be used to replace, \citep{hendel_-context_2023} compose \citep{Todd_Li_Sharma_Mueller_Wallace_Bau_2024} or augment \citep{Liu_Ye_Xing_Zou_2024} the latent states of a model, in order to alter its default behavior. Task vectors can be combined via arithmetic operations to solve a variety of tasks \citep{ilharco_editing_2023}. Prior work has also been investigating the power of tokens in defining a task \citep{Bai_Huang_Piano_Rondeau_Chen_Gao_Cheung_2024}.

\textbf{Other definitions of superposition.}
Our findings on superposition are inspired by notions of language models as multiverse generators \citep{reynolds_multiversal_2021,moire_2021}. One consequence of LLMs as a superposition of tasks is that the outputs may collapse to unintended simulacra, a behavior known as the ``Waluigi effect'' \citep{nardo2023the}.

Superposition has been defined in various related contexts of learning models. Feature superposition \citep{elhage_toy_2022} refers to a neural network's ability to represent multiple learned concept in a single neuron. Though our discovery of task superposition describes the same abstract idea, we stress that it is distinct from feature superposition because task superposition is most apparent in the final output of a model. Feature superposition is a microscopic-level observation whereas task superposition is a macroscopic-level observation. 

Superposition is also described as a way to store multiple models in a single set of parameters \citep{cheung_superposition_2019}, processing multiple inputs simultaneously \citep{Shen_Fan_Pratt_Park_Wallingford_Kakade_Holtzman_Krishna_Farhadi_Kusupati_2024,Murahari_Jimenez_Yang_Narasimhan_2022}. In our work, we demonstrate task superposition directly as a result of language pre-training, without the necessity of additional adapters or decoding strategies. 

\section{LLMs are a superposition of multiple in-context learners} \label{sec:llmssuperposition}

\begin{figure}[t!]
    \centering
    \begin{subfigure}[t]{1\textwidth}
        \centering
        \includegraphics[width=0.9\linewidth]{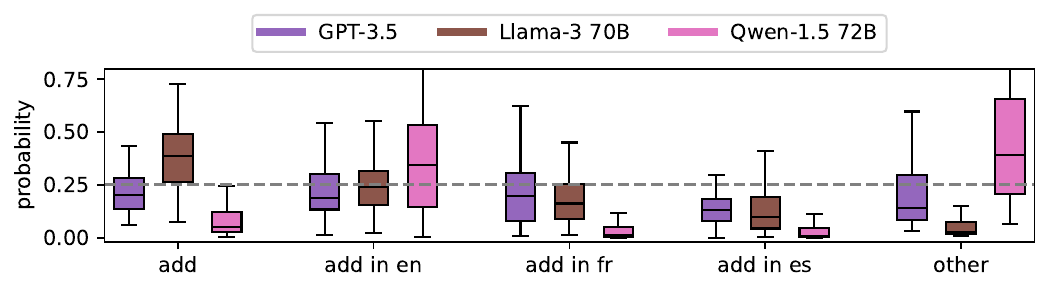}
        \caption{Addition in original numerical form and in different languages as in Figure \ref{fig:llama3_demo} (left).}
        \label{fig:add_translate}
    \end{subfigure}
    \begin{subfigure}[t]{1\textwidth}
        \centering
        \includegraphics[width=0.9\linewidth]{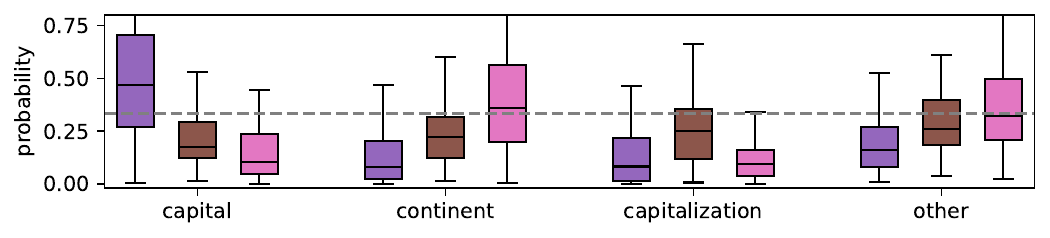}
        \caption{Capital name, continent name and capitalization as in Figure \ref{fig:llama3_demo} (right).}
        \label{fig:country}
    \end{subfigure}
    \begin{subfigure}[t]{1\textwidth}
        \centering
        \includegraphics[width=0.9\linewidth]{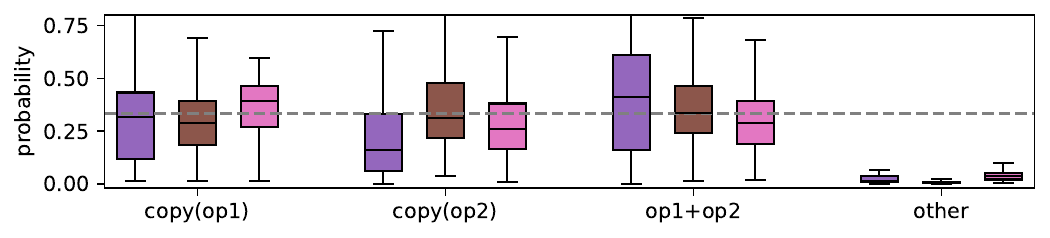}
        \caption{\texttt{copy(op1), \texttt{copy(op2)}}, and \texttt{op1+op2} as in Figure \ref{fig:gpt3_demo} (left).}
        \label{fig:AB}
    \end{subfigure}
    \begin{subfigure}[t]{1\textwidth}
        \centering
        \includegraphics[width=0.9\linewidth]{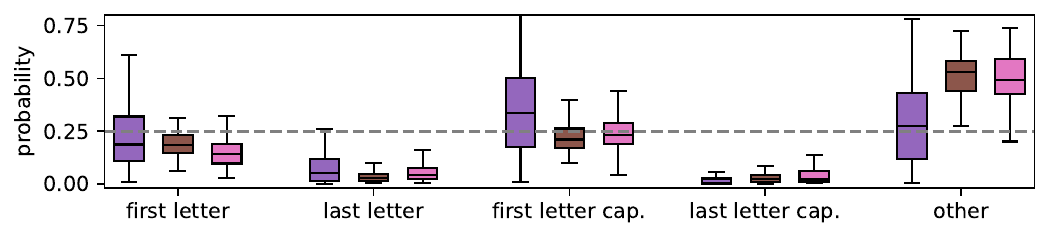}
        \caption{First or last letter in upper or lower cases as in Figure \ref{fig:gpt3_demo} (right).}
        \label{fig:letter}
    \end{subfigure}
    \caption{Distributions of probabilities for correct outputs for each task in box plots, where 0/25/50/75/100-percentiles are shown and region between 25 and 75 percentiles is colored. For every set of tasks, we tested with $100$ prompts and for each prompt, every task has $20$ random in-context task examples with order randomized like in Figure \ref{fig:fulldemo}. Category \texttt{other} is the sum of probabilities of all other outputs. Gray dashed line in each figure is the ideal probability if we assume the model perfectly calibrates its output distribution to the distribution of in-context task examples. With uniform distribution of task examples, the dashed lines are at $0.25$ (4 tasks setting) and $0.33$ (3 tasks setting).}
    \label{fig:op1op2_out}
    \vspace{-10pt}
\end{figure}

In this section, we want to investigate if existing pre-trained models exhibit superposition of multiple tasks and whether this phenomenon is common (i.e., whether we can observe this phenomenon on a variety set of tasks and different families of LLMs). 

\begin{highlight}
    \paragraph{Finding 1:} 
    \emph{LLMs can in-context learn multiple tasks in superposition when provided with prompts of a mixture of task examples.}
\end{highlight}

We denote $K$ by the number of tasks and consider four different settings of task mixtures.
\begin{enumerate}
    \item Numerical addition and addition in English, French or Spanish $(K=4)$. Example prompt is shown in Figure \ref{fig:llama3_demo} (left).
    \item Given a name of a country, name the capital, continent or capitalize the country name $(K=3)$. Example prompt is shown in Figure~\ref{fig:llama3_demo} (right).
    \item Given input ``\texttt{\{op1\}@\{op2\}}'', copy \texttt{op1}, \texttt{op2} or add \texttt{op1} and \texttt{op2} $(K=3)$. Example prompt is shown in Figure \ref{fig:gpt3_demo} (left).
    \item Given a word, output first letter or last letter in lower or upper cases $(K=4)$. Example prompt is shown in Figure~\ref{fig:gpt3_demo} (right).
\end{enumerate}

We provide GPT-3.5 \citep{brown_language_2020}, Llama-3 70B \citep{llama3modelcard} and Qwen-1.5 72B \citep{qwen} with prompts of uniform mixture of tasks (each task has $20$ random examples in the prompt ordered randomly). For each prompt consisting of in-context task examples (e.g., ``$11+26\rightarrow 37$'' for the first task in the first setting) and a query (e.g., ``$91+83\rightarrow$''), we calculate the probabilities of outputs when correctly performing each task on the query and plot the distribution of probabilities for each task in Figure \ref{fig:op1op2_out}. Details on calculating the probabilities is in Appendix~\ref{app:prob_details}.

Figure \ref{fig:op1op2_out} reveals that in all four sets of tasks, all models have non-negligible median values of probabilities for at least two tasks. This indicates that the models can in-context learn multiple tasks in superposition when provided with prompts of a mixture of task examples.

We can also observe that, even though every task in a prompt has an equal number of in-context examples ($20$ examples), LLMs do not calibrate their output distribution perfectly with the in-context task example distribution and they still have bias on what task to perform. For example, Figure \ref{fig:add_translate} shows that Llama-3 70B prefers performing numerical addition over addition in other languages, Qwen-1.5 72B prefers addition in English while GPT-3.5 does not have a strong preference over a single task. On the other hand, in Figure \ref{fig:country} GPT-3.5 has a strong preference over the \texttt{capital} task.

Additionally, some tasks are ``harder'' than other tasks. For example, in Figure \ref{fig:letter}, all models assign near-zero probability for task answers of \texttt{last\_letter} and \texttt{last\_letter\_cap}. The category \texttt{other} has relatively high values, indicating a high noise when prompted with in-context examples of this setting. In contrast, in Figure \ref{fig:AB}, category \texttt{other} has very small values, indicating that all models most of the time would correctly assign the output probabilities to the correct answers.

\section{Task superposition in models trained from scratch}
\label{sec:fromscratch}

In Section \ref{sec:llmssuperposition} we investigated task superposition in pre-trained LLMs at inference time. In this section, we further investigate how task superposition emerges in LLMs during training. Specifically, if we train the model to in-context learn one task at a time, can it perform task superposition when provided with prompts containing examples of multiple tasks?

To answer this question, we train a small GPT-2 model (12 heads, 12 layers and $\sim$86 million parameters) \citep{radford2019language} to learn a family of retrieval tasks. The input has the form ``\texttt{\{ch1\}\{ch2\}\{ch3\}\{ch4\}\{ch5\}\{ch6\}\{ch7\}\{ch8\}$\rightarrow$}'' where \texttt{ch1}, ..., \texttt{ch8} are distinct single characters. We consider $8$ retrieval tasks -- \texttt{ret1}, ..., \texttt{ret8} -- where \texttt{ret1} is to output \texttt{ch1} and so on. The model is trained to in-context learn one task (retrieve one of $\{\texttt{ch1},...,\texttt{ch8}\}$) at a time in training. Namely, during training, the model is only provided with text data such that each prompt only contains in-context examples of a single randomly chosen task (and different prompts can correspond to different tasks).

Concretely, for each sample, we randomly select task $t\in \{\texttt{ret1},...,\texttt{ret8}\}$ and inputs $\vx^{(1)},...,\vx^{(m)}$, where each $\vx^{(j)}$ is an eight-character long string. We then form the sequence $\vs = [\vx^{(1)},\vg_t(\vx^{(1)}), ..., \vx^{(m)}, \vg_t(\vx^{(m)})]$ where $\vg_t(\vx^{(j)})$ is the output of performing task $t$ on $\vx^{(j)}$. We train the model $M_\theta$ parametrized by $\theta$ using ICL training. In particular, we minimize the following objective:

\begin{equation}
    \label{eq:icl_training}
    \min_{\theta}\mathbf{E}_\vs\left(\frac{1}{m-1}\sum_{j=1}^{m-1}\text{CE}(M_\theta(\vs_j\oplus\vx^{(j+1)}), \vg_t(\vx^{(j+1)}))\right),
\end{equation}
where $\vs_j\oplus\vx^{(j+1)}\equiv[\vx^{(1)},\vg_t(\vx^{(1)}), ..., \vx^{(j)},\vg_t(\vx^{(j)}), \vx^{(j+1)}]$ and $\text{CE}$ is the cross-entropy loss.

\begin{figure}[t!]
    \centering
    \begin{subfigure}[t]{0.46\textwidth}
        \centering
        \includegraphics[width=1\linewidth]{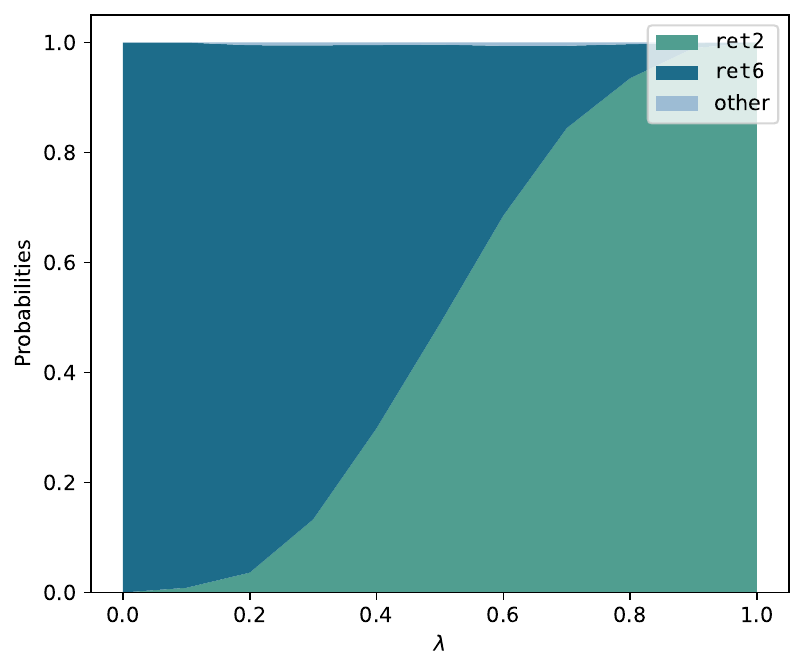}
        \captionsetup{width=0.9\textwidth}
        \vspace{-5pt}
        \caption{\centering Trained on retrieval tasks and tested on prompts with mixture of in-context examples of \texttt{ret2} and \texttt{ret6}.} 
        \label{fig:scratch_ret}
    \end{subfigure}
    \begin{subfigure}[t]{0.46\textwidth}
        \centering
        \includegraphics[width=1\linewidth]{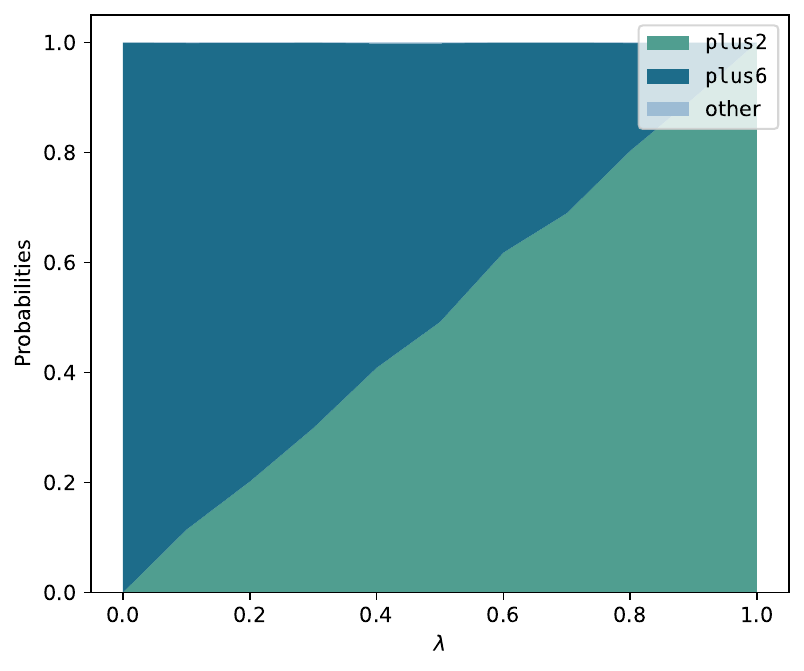}
        \captionsetup{width=0.9\textwidth}
        \vspace{-5pt}
        \caption{\centering Trained on addition tasks and tested on prompts with mixture of in-context examples of \texttt{plus2} and \texttt{plus6}.}
        \label{fig:scratch_plus}
    \end{subfigure}
    \vspace{-5pt}
    \caption{We consider two different training settings of tasks: (a) given an eight-character length string as input, consider \texttt{ret1}, ..., \texttt{ret8} where \texttt{ret1} is to retrieve the first character and so on; and (b) given a two-digit integer as an input, consider \texttt{plus0}, ..., \texttt{plus9} where \texttt{plus0} is to add $0$ on the input and so on. The model in-context learn one task at a time during training. After training, for each setting, we select two tasks ad we provide the model with prompts containing in-context examples from these two tasks and vary the mixture ratio $\lambda$ such that the in-context task example distribution for two tasks is $[\lambda, 1-\lambda]$. We plot $\lambda$ on x-axis and the output probabilities of task answers for each task on y-axis.}
    \label{fig:main}
    \vspace{-10pt}
\end{figure}

After training, we provide the model with prompts containing in-context examples of two tasks (in particular, we choose \texttt{ret2} and \texttt{ret6}) and see if the model performs task superposition. We vary the proportion of in-context examples of two tasks and plot the output distributions in Figure \ref{fig:scratch_ret}.

Similarly, we consider a second setting involving $10$ tasks. Given a two digit integer input \texttt{num}, task \texttt{plus0} outputs \texttt{num}, task \texttt{plus1} outputs $\texttt{num}+1$ and so on, up to task \texttt{plus9}. The model is trained to in-context learn one of  \texttt{plus0},..., \texttt{plus9} at a time, following the procedure above. During inference time, the model is tested with prompts containing a mixture of in-context examples from tasks \texttt{plus2} and \texttt{plus6}. We vary the mixture ratio and show the output distributions in Figure \ref{fig:scratch_plus}.

\begin{highlight}
    \paragraph{Finding 2:} 
    \emph{Transformers can in-context learn multiple tasks in superposition even if trained to in-context learn one task at a time.}
\end{highlight}

Remarkably, from Figure \ref{fig:scratch_ret} and \ref{fig:scratch_plus}, GPT-2 trained from scratch that in-context learns one task at a time can generalize to simultaneously performing multiple tasks and calibrate the output probabilities according to the in-context task example distribution when provided with a mixture of in-context examples. For example, in Figure \ref{fig:scratch_ret} at the mixture ratio $\lambda=0.5$, meaning that $50$ percent of the examples in the prompt is from task \texttt{ret2} and the other $50$ percent comes from task \texttt{ret6}, we can see the output probabilities for task answers of \texttt{ret2} and \texttt{ret6} being roughly $[0.5, 0.5]$. We can also observe similar behavior in Figure \ref{fig:scratch_plus}.

\section{Transformers have the capacity to perform task superposition}
\label{sec:construction}
In this section, we explore whether Transformers have the inherent expressivity to perform multiple tasks in superposition with a single inference call. To this end, we provide a theoretical construction of a Transformer which, given the ability to implement multiple tasks, performs task superposition depending on the examples given in-context.

\begin{restatable}{theorem}{constructtheorem}
\label{thm:constructtheorem}
    A seven layer transformer with embedding dimension $\cO(d + \log(mn))$ with $K$ heads per attention layer can perform $K$ tasks on vectors of dimension $d$ in superposition, with weighting based on $m$ different in-context examples each of length $n$ .
\end{restatable}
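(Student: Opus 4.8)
The plan is to build the seven-layer transformer so that it explicitly realizes the Bayesian mixture identity $\pr(\mathsf{output}\mid\mathsf{prompt})\approx\sum_{k}\pr(\mathsf{output}\mid t_k,\mathsf{prompt})\,\pr(t_k\mid\mathsf{prompt})$ from the introduction, with each of the $K$ attention heads dedicated to a single task $t_k$. Concretely, I would engineer the residual stream at the query position to hold, after the last layer, the vector $\frac{1}{m}\sum_{j=1}^{m}\vg_{t(j)}(\vx^{(\mathrm q)})=\sum_{k=1}^{K}w_k\,\vg_{t_k}(\vx^{(\mathrm q)})$, where $t(j)$ is the task demonstrated by the $j$-th in-context example, $\vx^{(\mathrm q)}$ is the query input, and $w_k=\frac1m\#\{j:t(j)=k\}$ is the empirical frequency of task $k$ in the prompt. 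This is precisely the frequency-weighted superposition of the $K$ per-task answers, so feeding it through the unembedding and softmax produces an output distribution that mixes the task predictions in proportion to how often each task appears in context.

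First I would fix the encoding: each of the $mn$ token positions carries its length-$d$ content alongside a positional address of width $\cO(\log(mn))$ (a binary counter suffices), which is exactly what sets the stated embedding budget. Crucially, the task identity is \emph{not} materialized as a width-$K$ one-hot in the stream --- that would break the $\cO(d+\log(mn))$ bound --- so all task routing is carried by the head assignment instead. The early layers would (i) use the positional address so that each example-ending position gathers its own input block $\vx^{(j)}$ and its demonstrated output, and (ii) broadcast the query $\vx^{(\mathrm q)}$ to every position by attending to the query slot. Head $k$ is then hardwired to task $t_k$: invoking the assumed ability to implement each task inside a transformer, head $k$ computes $\vg_{t_k}(\vx^{(\mathrm q)})$ at each example-ending position and gates itself \emph{on} only where its own prediction is consistent with that position's demonstrated output, i.e. only where $t(j)=k$. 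Finally, the query position attends \emph{uniformly} over the $m$ example-ending positions to this gated value stream; since uniform attention assigns weight $1/m$ to each and then sums, the softmax normalization performs the counting for free and returns exactly $\sum_k w_k\,\vg_{t_k}(\vx^{(\mathrm q)})$, giving $\pr(t_k\mid\mathsf{prompt})=w_k$. A final layer maps the result to vocabulary logits.

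The dimension bookkeeping is then routine: the stream only ever holds a content slot of size $\cO(d)$, a positional slot of size $\cO(\log(mn))$, and a constant number of reusable scratch slots, so the width stays $\cO(d+\log(mn))$ independently of $K$; the $K$ dependence surfaces solely as the number of parallel heads, each writing its $\cO(d)$-sized contribution additively into the shared stream. The depth of seven is budgeted across positional gathering, query broadcast, the two or three layers that implement a task and run the self-gating comparison, the uniform-averaging layer, and an output cleanup.

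The hard part, I expect, is the conditional per-example step: having head $k$ apply $t_k$ to the query \emph{and} decide from position $j$'s own content whether $t(j)=k$, all without an explicit width-$K$ label. The gating-by-consistency trick (fire iff the head's prediction matches the demonstrated output) is the cleanest route, but it only works when the tasks are \emph{distinguishable} on the in-context inputs: two tasks agreeing on every demonstrated pair would be conflated, so the construction implicitly needs a separation/injectivity condition on $\{\vg_{t_k}\}$ over the example inputs. The second delicate point is forcing a genuinely uniform attention pattern in the averaging step, so that softmax returns counts rather than an exponentially skewed reweighting; I would guarantee this by zeroing that head's query and key maps on the content subspace so every pre-softmax logit is an identical constant. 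Everything else --- the positional addressing, the broadcast, and the additive cross-head combination --- is standard transformer-construction bookkeeping.
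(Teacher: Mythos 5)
Your proposal takes essentially the same route as the paper's construction: one head per task, task identification by checking whether each task's prediction is consistent with the demonstrated label at every example-ending position (the paper's $\|\vg_k(\vx^{(j)})-\vy^{(j)}\|_1\approx 0$ test followed by a ReLU threshold), counting via a uniform-attention average over the $m$ label positions, and a final frequency-weighted combination of the per-task outputs on the query --- this is precisely the content of Lemmas \ref{lem:ident}--\ref{lem:execution} and the parallel-heads assembly. The only substantive differences are that you fold the paper's two-stage ``compute the scalar proportion $p_k$, then approximately multiply the task output by $p_k$ via a softmax'' into a single gated uniform-attention step (which still hides the same vector-times-indicator multiplication the paper only realizes approximately), and that you state the task-distinguishability caveat explicitly where the paper leaves it implicit; neither changes the substance of the argument.
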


The proof of Theorem \ref{thm:constructtheorem} is provided in Appendix \ref{subsec:superposed_tasks}. Note that while this does not guarantee that training a Transformer will actually find these parameters, it does indicate that Transformers are expressive enough to perform task superposition at test time. Below we outline the main ideas used in the proof.
  \paragraph{Prediction based on multiple tasks.} Assume that we are given $m$ in-context samples $(\vx_1^{(j)},\hdots,\vx_{n-2}^{(j)}, \text{`}\eq\text{'}, \vy^{(j)})_{j=1}^m$  where `$\eq$' represents a specific value used only for preceding the label, and a set of $k$ different Transformers $\text{TF}_i$ which can implement the $T$ different desired tasks, where each deterministic task is denoted as $\vg_i(\vx^{(j)})$ with $i\in[k]$ and $j\in[m]$, \textit{i.e.} $\vy^{(j)}=\vg_i(\vx^{(j)})$ for some task $i$ dependent on sample $j$. Using the weights of each $\text{TF}_i$, we can compute outputs of the following form:
\begin{equation*}
\resizebox{1\linewidth}{!}{$
    \begin{bmatrix}
        \hdots & \vx_1^{(j)}  & \hdots & \vx_{n-2}^{(j)} & \eq & \vy^{(j)} & \hdots\\
        \hdots  & \vZero & \hdots & \vZero & \vZero & \vZero & \hdots \\
          & \vdots &  & \vdots & \vdots & \vdots & \\
        \hdots & \vZero & \hdots & \vZero & \vZero & \vZero & \hdots
\end{bmatrix} \to \begin{bmatrix}
        \hdots & \vx_1^{(j)} & \hdots & \vx_{n-2}^{(j)} & \eq & \vy^{(j)} & \hdots\\
        \hdots & \vZero & \hdots & \vZero & \vZero &\|\vg_1(\vx^{(j)}) - \vy^{(j)}\|_1& \hdots \\
        & \vdots & & \vdots & \vdots & \vdots & \\
        \hdots & \vZero & \hdots & \vZero  & \vZero &\|\vg_T(\vx^{(j)}) - \vy^{(j)}\|_1& \hdots
\end{bmatrix}
$}
\end{equation*}
We use the $l_1$ norm to aggregate the prediction, in case that the task is multi-dimensional. 
These differences are used to identify tasks, as $\|\vg_i(\vx^{(j)})-\vy^{(j)}\|_1\approx 0$ for $\vy^{(j)}$ coming from task $i$. Different heads at each layer in the model are used to execute each of the tasks in parallel using the weights from $\text{TF}_i$. In Appendix \ref{app:construction} we construct tasks where an arbitrary function $\vg_i(\vx^{(j)}_l)$ is implemented using $\relu$s for some fixed $l$ that is task-specific. 

\paragraph{Creating task identifiers.} Having the differences between the implemented function and the label, we first use the $\relu$s to clean up the vectors $\vv_k$ so that only the positions in each vector that are associated with a task are maintained and the rest are set to $1$\footnote{This step is not mandatory, but it ensures that we have no values over which we have no control. We leave as future work an error analysis on how these values could affect the task identifiers.}. We thus create the vectors $(\vv'_k)_i =  \|\vg_k(\vx_{1:l-1}^{(j)}) - \vx_l^{(j)}\|_1$ and $(\vv'_*)_*=1$ otherwise. Now we use ReLUs to threshold and create an indicator vectors $\vOne_{\{\|\vg_k(\vx_{1:l-1}^{(j)}) - \vx_l^{(j)}\|_1 \approx 0\}}$ which identify the task, \ie these are task identifiers. Notice that if the task is correctly predicted then the difference should be close to $0$ (up to some error), while if the task is not identified the corresponding value would not be $0$; the rest of the rows would be $1$. We have created one vector for each task, which has $1$ in the position of the corresponding task if the task was identified in the context.\looseness-1
\paragraph{Averaging and task superposition.} As a last step, we average all the task identifiers and place the result in the last column, in which the next prediction will happen. We then use the averaged task identifier to weight the prediction of each task based on it, as in task superposition. If the task has been identified multiple times in the context, it would be assigned a higher weight/probability.

\section{Task superposition through the lens of task vectors} \label{sec:taskvectors}
While in Section \ref{sec:construction} we provide an existential result by constructing a Transformer that performs task superposition and shows that task superposition is well within the expressive power of Transformers, we would like to further investigate how task superposition manifest in pretrained LLMs internally. In this section we explore the underlying mechanisms that LLMs employ during task superposition. In particular, we focus our empirical study on \emph{task vectors} \citep{hendel_-context_2023} where the detailed implementation is in Appendix \ref{app:task_vector_details}. Task vectors are vectors in the embedding space and are found to encode the algorithm that a model internally implements to solve a task given in-context demonstrations.

\begin{figure}
    \begin{subfigure}[b]{0.5\textwidth}
    \centering
    \includegraphics[scale =0.4]{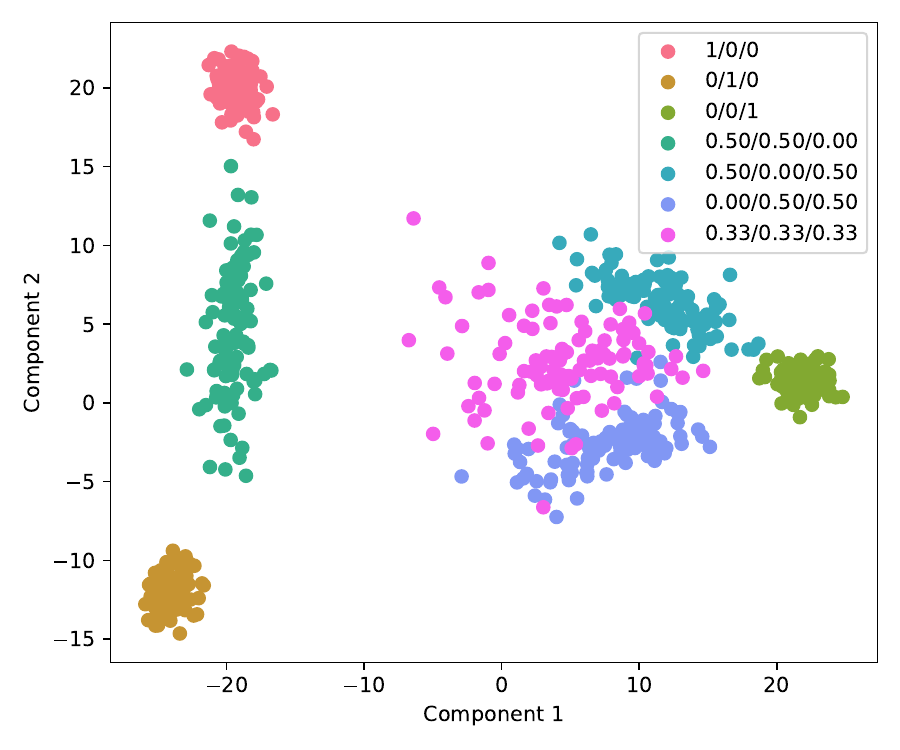}
    \caption{\texttt{copy(op1)} / \texttt{copy(op2)} / \texttt{op1+op2}}
    \label{fig:tv_copy}
    \end{subfigure}
    \begin{subfigure}[b]{0.5\textwidth}
    \centering
    \includegraphics[scale =0.4]{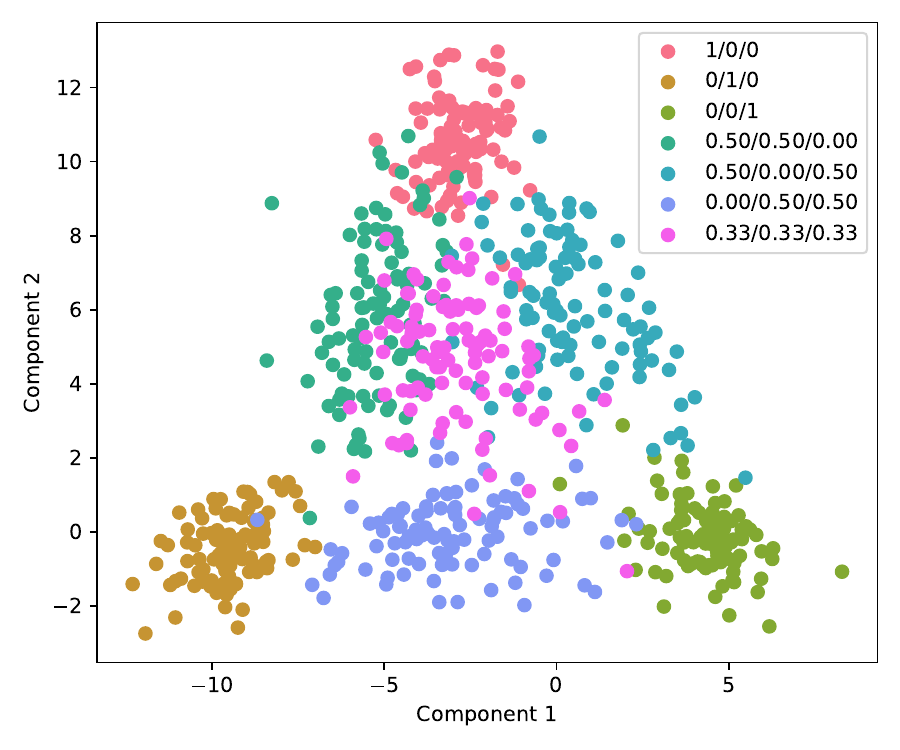}
    \caption{\texttt{to\_fr} / \texttt{to\_de} / \texttt{to\_it}}
    \label{fig:tv_translate}
    \end{subfigure}
\vspace{-10pt}
\caption{Task vectors of Llama-3 8B projected onto two axes chosen by LDA for two sets of tasks: \textbf{(a)} \texttt{copy(op1)}, \texttt{copy(op2)} and \texttt{op1+op2} and \textbf{(b)} \texttt{to\_fr}, \texttt{to\_de} and \texttt{to\_it}. For tasks $t_1, t_2, t_3$, we use ``$\pr(t_1)/\pr(t_2)/\pr(t_3)$'' to denote different levels of task mixtures, e.g., ``0.50/0.50/0.00'' represents the case where the in-context task examples are $50\%$ $t_1$, $50\%$ $t_2$ and $0\%$ $t_3$.}
\label{fig:tv_mixture}
\vspace{-15pt}
\end{figure}

We want to investigate if there is any relation between the task vectors of each individual task and the task vectors of a mixture of task examples in the prompt. To this end, we consider two sets of tasks:

\begin{enumerate}[label=(\alph*)]
    \item \texttt{copy(op1)}, \texttt{copy(op2)} and \texttt{op1+op2} as in Figure \ref{fig:gpt3_demo} (left).
    \item Given a two-digit integer, task \texttt{to\_fr} translates it to French, task \texttt{to\_de} translates it to German and task \texttt{to\_it} translates it to Italian.
\end{enumerate}

For each set of tasks, we collect the task vectors for each individual task and task vectors extracted from prompts that contain examples of different tasks. In Figure~\ref{fig:tv_mixture}, we project task vectors along two axes chosen by linear discriminant analysis (LDA). 

\begin{highlight}
    \paragraph{Finding 3:} 
    \emph{LLMs internally combine task vectors during task superposition.}
\end{highlight}

Interestingly, we observe that the locations of task vectors of a mixture of tasks strongly correlate with the locations of task vectors for each individual task and the in-context task example distribution (the mixture ratio for examples of different tasks). For example, if the prompt includes an equal number of in-context examples from each task, the task vectors are roughly centered in the middle; if the prompt only contains in-context examples of two tasks, then the task vectors roughly lie on the connecting line between task vectors of two individual tasks. We argue that this observation is indicative of the fact that, when prompted with a mixture of in-context task examples, LLMs internally combine task vectors.

As we observe signs that LLMs internally compose task vectors, we want to further investigate whether we can reproduce the task superposition phenomenon by patching in a convex combination of task vectors. For example, for tasks \texttt{copy(op1)} and \texttt{copy(op2)}, we first extract the corresponding task vectors $\text{V}_{\texttt{copy(op1)}} $ and $\text{V}_{\texttt{copy(op2)}}$ on Llama-3 8B using the method described in Appendix \ref{app:task_vector_details}. We then make a convex combination of the two task vectors with parameter $\lambda$ that controls the ratio:
\begin{align*}
    \text{V}_{\text{interpolate},\lambda}=\lambda \cdot \text{V}_{\texttt{copy(op1)}} + (1-\lambda) \cdot \text{V}_{\texttt{copy(op2)}}.
\end{align*}

\begin{highlight}
    \paragraph{Finding 4:} 
    \emph{Convex combinations of task vectors produce task superposition.}
\end{highlight}

\begin{figure}[h!]
    \begin{subfigure}[b]{0.05\textwidth}
        \raisebox{0.9\totalheight}{%
            \begin{tabular}{c}
                \rotatebox{90}{\small interpolation} \\[0.9cm]
                \rotatebox{90}{\small in-context} \\
            \end{tabular}%
        }%
    \end{subfigure}%
    \begin{subfigure}[b]{0.45\textwidth}
        \includegraphics[width=\textwidth]{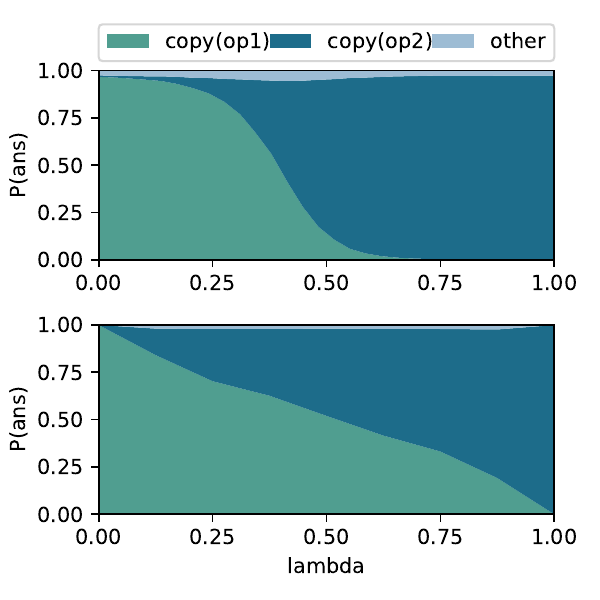}
        \vspace{-7pt}
        \caption{Tasks: \texttt{copy(op1)} and \texttt{copy(op2)}}
        \label{fig:subcopy}
    \end{subfigure}%
    \begin{subfigure}[b]{0.45\textwidth}
        \includegraphics[width=\textwidth]{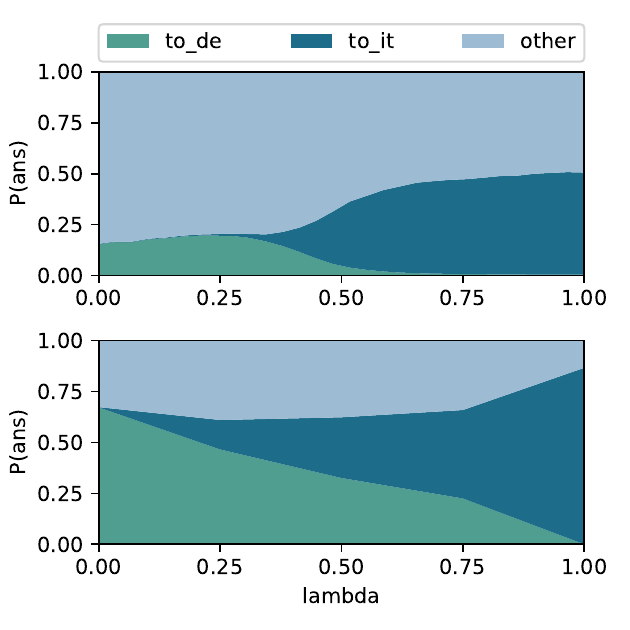}
        \vspace{-7pt}
        \caption{Tasks: translate \texttt{to\_de} and \texttt{to\_it}}
        \label{fig:subtranslate}
    \end{subfigure}
    \vspace{-5pt}
    \caption{On Llama-3 8B, we vary the proportion, $\lambda$, between two tasks and observe how the output probabilities for the correct answers change. The proportion $\lambda$ is varied in two ways: (1) in the top row, we plot the output from patching in a convex combination of task vectors for two tasks. (2) in the bottom row, we plot the output from a mixed proportion of in-context examples for the two tasks. Subplot (a) shows the output probabilities from mixing two copy tasks and (b) shows the probabilities from mixing two translate tasks.
    }
    \label{fig:tv_interpolation}
    \vspace{-15pt}
\end{figure}

For a new query (in this scenario in the form ``\texttt{\{op1\}@\{op2\}$=$}''), we patch the vector $\text{V}_{\text{interpolate},\lambda}$ into the model at the task vector layer. We calculate the model output probabilities that correspond to each task while we vary $\lambda$. For each $\lambda$, we use $100$ different queries and plot the average probabilities in the top row of Figure~\ref{fig:tv_interpolation}. As a comparison, in the bottom rows of Figure~\ref{fig:tv_interpolation}, we plot the corresponding output probabilities when providing the models with prompts containing mixture of task examples where the mixture ratio is controlled by $\lambda$.

In top row of Figure \ref{fig:tv_interpolation}, we observe that patching convex combinations of task vectors into the model produces task superposition. We would also like to point out that in Figure \ref{fig:subtranslate}, although irrelevant outputs sum up to a large probability, the task answers for two tasks \texttt{to\_de} and \texttt{to\_it} in most cases will still be the top-2 answers.

Comparing the top rows and the bottom rows, we can see that top rows (the scenario of interpolating task vectors of individual tasks) have larger probabilities of irrelevant output (category \texttt{other}). Task vector interpolation also produces less of a linear relationship between $\lambda$ and the output probabilities. This shows that while convex combinations of task vectors are sufficient for producing task superposition, the convex combination does not fully explain task superposition. We leave it to future work to investigate other mechanistic explanations of task superposition.

\section{Task superposition capabilities as the model scales}
\label{sec:scale}

\begin{highlight}
    \paragraph{Finding 5:} 
    \emph{Within the same LLM family, bigger models can solve more tasks in parallel and better calibrate to ICL distribution.}
\end{highlight}

We want to further investigate how models' task superposition capabilities change as the model size scales. In particular, we investigate two questions: 1) whether larger models can perform more tasks in-context and 2) whether larger models can align their output distribution more closely with the distribution of task examples provided in the prompt. We chose the Qwen-1.5 model family since it contains several model sizes ranging from 0.5B to 14B parameters. 

We first introduce a quantity which captures the capability of a model to perform multiple tasks. Given a prompt that contains examples of $K$ tasks, we define $r$ to be the number of these tasks whose correct answers appear among the model's top-$K$ most likely outputs. Note that $r \leq K$.

To see how close the model align the output distribution with the distribution of task examples, we use KL-divergence defined below:
\begin{equation}
    \text{KL}(\mathcal{P}||\cD)=\sum_{x\in X}\mathcal{P}(x)\log\left(\frac{\mathcal{P}(x)}{\cD(x)}\right),
\end{equation}
where $\mathcal{P}$ is the models' probabilities on the outputs when correctly performing each task on the query and $\cD$ is the in-context task example distribution. For example the prompt in Figure \ref{fig:llama3_demo} (left) gives $\mathcal{P}=[0.5217, 0.1316, 0.1110, 0.2169, ...]$ and $\cD=[0.25, 0.25, 0.25, 0.25, 0, ...]$.

We consider the setting of $K=6$ different tasks: given an input of the form ``\texttt{\{num\}$\rightarrow$}'' where $\texttt{num}$ is a two-digit integer, we consider $6$ tasks that output (1) \texttt{num} itself, (2) negation of \texttt{num}, (3) $\texttt{num} + 1$, (4) $\texttt{num}-1$, (5) $\texttt{num}\times 2$ and (6) $\texttt{num}^2$.

We choose the number of in-context examples $m=60$ (each task has $10$ examples) and configure the prompt with three different in-context task example distributions $\cD_1, \cD_2$ and $\cD_3$. In particular, $\cD_1$ is the uniform distribution, $\cD_2$ has probability $0.5$ on the third task and $0.1$ on other tasks, and $\cD_3$ is a distribution with probabilities alternating between $0.25$ and $0.083$.

For each in-context task examples distribution $\cD_i$, we generate $100$ prompts and for each prompt we calculate the probabilities of outputs when correctly performing each task. The average values of $r$ and KL-divergence under three distributions are shown in Figure \ref{fig:r_and_kl}. 

\begin{figure}[h!]
    \vspace{-6pt}
    \centering
    \begin{subfigure}[t]{0.49\textwidth}
        \centering
        \includegraphics[width=1\linewidth]{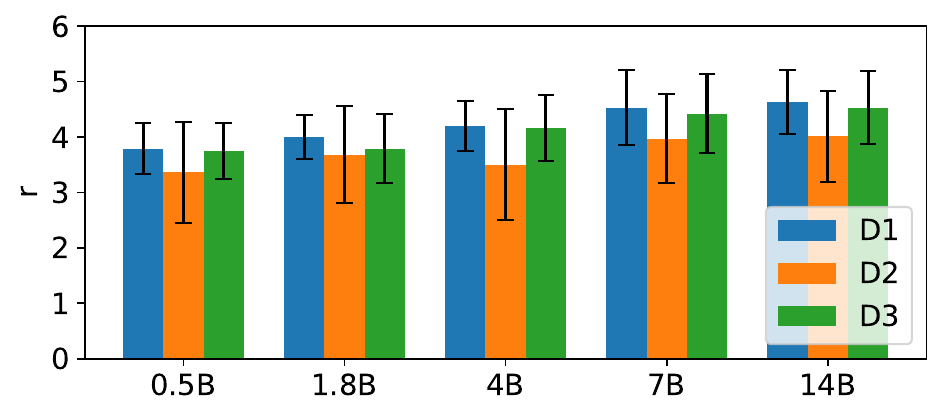}
        \vspace{-7pt}
        \captionsetup{width=0.9\textwidth}
        \caption{\centering $r$ (the number of tasks whose correct answers appear in top-$K$ most likely outputs).}
        \vspace{-7pt}
        \label{fig:r_val}
    \end{subfigure}
    \begin{subfigure}[t]{0.49\textwidth}
        \centering
        \includegraphics[width=1\linewidth]{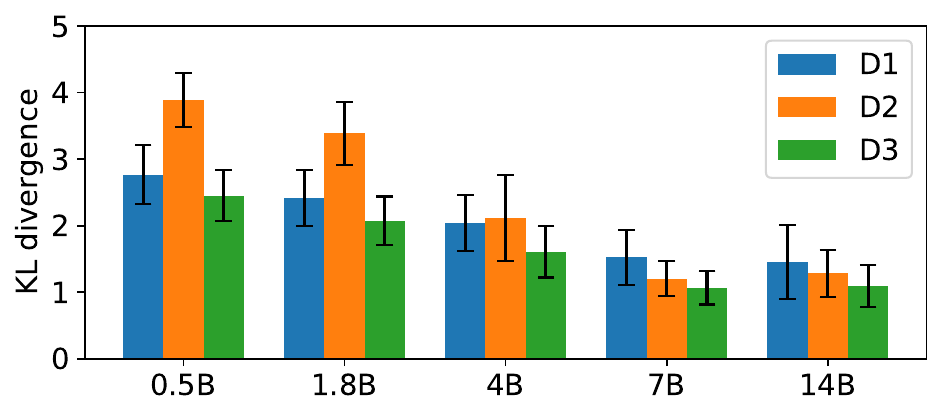}
        \vspace{-7pt}
        \caption{KL divergence.}
        \vspace{-7pt}
        \label{fig:kl_div}
    \end{subfigure}
    \caption{(a) Average number of tasks completed, $r$, and (b) KL divergence for Qwen-1.5 model family under ICL distributions $\cD_1, \cD_2$ and $\cD_3$ where $\cD_1$ is the uniform distribution, $\cD_2$ has probability $0.5$ on the third task and $0.1$ on other tasks, and $\cD_3$ is a distribution with probabilities alternating between $0.25$ and $0.083$.}
    \label{fig:r_and_kl}
    \vspace{-6pt}
\end{figure}

In Figure \ref{fig:r_val}, we can observe that bigger models have higher $r$ values (except for task distribution $\cD_2$, 4B model has slightly lower $r$ than that of the 1.8B model). This shows bigger models will have more correct answers of tasks show up in their top-$K$ probable outputs and therefore they can solve more tasks at the same time. In Figure \ref{fig:kl_div}, we can see that for larger models like Qwen-1.5 7B and Qwen-1.5 14B, the KL-divergence values are small, and for each model, the differences between KL-divergence values under in-context task example distributions $\cD_1$, $\cD_2$ and $\cD_3$ are small. This indicates that bigger models can better calibrate their output distribution to the in-context task example distribution.

\section{Limitations and future directions}

One limitation of our work is the current gap between the demonstrated capability of LLMs to perform task superposition and its practical application in real-world scenarios. While we have shown that LLMs possess the capacity to execute multiple tasks simultaneously, conventional decoding algorithms are not equipped to fully leverage this capability. This limitation stems from what we term "generation collapse," a phenomenon where, after the first token is generated, the model tends to converge on predicting tokens for a single task, effectively negating its ability for multi-task execution.

This collapse presents a substantial challenge in harnessing the full power of task superposition. It highlights a critical area for future research: developing decoding strategies that can maintain the model's multi-task state throughout the generation process.
Recent work by \citet{shen2024superposed} offers some hope that this direction may be fruitful, by proposing a ``superposed decoding'' algorithm. Their method efficiently generates multiple streams of tokens from a single inference pass by utilizing superposed token embeddings. While this approach represents a significant step forward, it also highlights the potential for further innovation in this area.

\section{Conclusion} \label{conclusion}

We report on the discovery of task superposition, which is the ability of LLMs to simultaneously solve distinct tasks from in-context examples. Task superposition is present in a variety of pretrained models, and becomes more accurate at predicting the distribution of tasks as the model size increases. We also find evidence that while displaying task superposition, models internally mix the task vectors of each individual task. We hope that our findings will contribute to understanding in-context learning mechanisms and enhance our knowledge of LLMs overall.

\bibliography{iclr2025_conference,bibliography,zotero_citations}

\begin{thebibliography}{53}
\providecommand{\natexlab}[1]{#1}
\providecommand{\url}[1]{\texttt{#1}}
\expandafter\ifx\csname urlstyle\endcsname\relax
  \providecommand{\doi}[1]{doi: #1}\else
  \providecommand{\doi}{doi: \begingroup \urlstyle{rm}\Url}\fi

\bibitem[Agarwal et~al.(2024)Agarwal, Singh, Zhang, Bohnet, Chan, Anand, Abbas, Nova, Co-Reyes, Chu, Behbahani, Faust, and Larochelle]{Agarwal_Singh_Zhang_Bohnet_Chan_Anand_Abbas_Nova_Co-Reyes_Chu_etal._2024}
Rishabh Agarwal, Avi Singh, Lei~M. Zhang, Bernd Bohnet, Stephanie Chan, Ankesh Anand, Zaheer Abbas, Azade Nova, John~D. Co-Reyes, Eric Chu, Feryal Behbahani, Aleksandra Faust, and Hugo Larochelle.
\newblock Many-shot in-context learning, 2024.

\bibitem[Ahn et~al.(2023)Ahn, Cheng, Daneshmand, and Sra]{Ahn_Cheng_Daneshmand_Sra_2023}
Kwangjun Ahn, Xiang Cheng, Hadi Daneshmand, and Suvrit Sra.
\newblock Transformers learn to implement preconditioned gradient descent for in-context learning.
\newblock \emph{arXiv preprint arXiv:2306.00297}, 2023.

\bibitem[AI@Meta(2024)]{llama3modelcard}
AI@Meta.
\newblock Llama 3 model card.
\newblock 2024.
\newblock URL \url{https://github.com/meta-llama/llama3/blob/main/MODEL_CARD.md}.

\bibitem[Akyürek et~al.(2023)Akyürek, Schuurmans, Andreas, Ma, and Zhou]{akyurek_what_2023}
Ekin Akyürek, Dale Schuurmans, Jacob Andreas, Tengyu Ma, and Denny Zhou.
\newblock What learning algorithm is in-context learning? {Investigations} with linear models, May 2023.
\newblock URL \url{http://arxiv.org/abs/2211.15661}.
\newblock arXiv:2211.15661 [cs].

\bibitem[Bai et~al.(2023{\natexlab{a}})Bai, Bai, Chu, Cui, Dang, Deng, Fan, Ge, Han, Huang, Hui, Ji, Li, Lin, Lin, Liu, Liu, Lu, Lu, Ma, Men, Ren, Ren, Tan, Tan, Tu, Wang, Wang, Wang, Wu, Xu, Xu, Yang, Yang, Yang, Yang, Yao, Yu, Yuan, Yuan, Zhang, Zhang, Zhang, Zhang, Zhou, Zhou, Zhou, and Zhu]{qwen}
Jinze Bai, Shuai Bai, Yunfei Chu, Zeyu Cui, Kai Dang, Xiaodong Deng, Yang Fan, Wenbin Ge, Yu~Han, Fei Huang, Binyuan Hui, Luo Ji, Mei Li, Junyang Lin, Runji Lin, Dayiheng Liu, Gao Liu, Chengqiang Lu, Keming Lu, Jianxin Ma, Rui Men, Xingzhang Ren, Xuancheng Ren, Chuanqi Tan, Sinan Tan, Jianhong Tu, Peng Wang, Shijie Wang, Wei Wang, Shengguang Wu, Benfeng Xu, Jin Xu, An~Yang, Hao Yang, Jian Yang, Shusheng Yang, Yang Yao, Bowen Yu, Hongyi Yuan, Zheng Yuan, Jianwei Zhang, Xingxuan Zhang, Yichang Zhang, Zhenru Zhang, Chang Zhou, Jingren Zhou, Xiaohuan Zhou, and Tianhang Zhu.
\newblock Qwen technical report.
\newblock \emph{arXiv preprint arXiv:2309.16609}, 2023{\natexlab{a}}.

\bibitem[Bai et~al.(2023{\natexlab{b}})Bai, Chen, Wang, Xiong, and Mei]{bai2023transformers}
Yu~Bai, Fan Chen, Huan Wang, Caiming Xiong, and Song Mei.
\newblock Transformers as statisticians: Provable in-context learning with in-context algorithm selection, 2023{\natexlab{b}}.

\bibitem[Bai et~al.(2024)Bai, Huang, Piano, Rondeau, Chen, Gao, and Cheung]{Bai_Huang_Piano_Rondeau_Chen_Gao_Cheung_2024}
Yu~Bai, Heyan Huang, Cesare Spinoso-Di Piano, Marc-Antoine Rondeau, Sanxing Chen, Yang Gao, and Jackie Chi~Kit Cheung.
\newblock Identifying and analyzing task-encoding tokens in large language models.
\newblock \penalty0 (arXiv:2401.11323), February 2024.
\newblock \doi{10.48550/arXiv.2401.11323}.
\newblock URL \url{http://arxiv.org/abs/2401.11323}.
\newblock arXiv:2401.11323 [cs].

\bibitem[Brown et~al.(2020)Brown, Mann, Ryder, Subbiah, Kaplan, Dhariwal, Neelakantan, Shyam, Sastry, Askell, Agarwal, Herbert-Voss, Krueger, Henighan, Child, Ramesh, Ziegler, Wu, Winter, Hesse, Chen, Sigler, Litwin, Gray, Chess, Clark, Berner, McCandlish, Radford, Sutskever, and Amodei]{brown_language_2020}
Tom Brown, Benjamin Mann, Nick Ryder, Melanie Subbiah, Jared~D Kaplan, Prafulla Dhariwal, Arvind Neelakantan, Pranav Shyam, Girish Sastry, Amanda Askell, Sandhini Agarwal, Ariel Herbert-Voss, Gretchen Krueger, Tom Henighan, Rewon Child, Aditya Ramesh, Daniel Ziegler, Jeffrey Wu, Clemens Winter, Chris Hesse, Mark Chen, Eric Sigler, Mateusz Litwin, Scott Gray, Benjamin Chess, Jack Clark, Christopher Berner, Sam McCandlish, Alec Radford, Ilya Sutskever, and Dario Amodei.
\newblock Language {Models} are {Few}-{Shot} {Learners}.
\newblock In \emph{Advances in {Neural} {Information} {Processing} {Systems}}, volume~33, pp.\  1877--1901. Curran Associates, Inc., 2020.
\newblock URL \url{https://proceedings.neurips.cc/paper_files/paper/2020/hash/1457c0d6bfcb4967418bfb8ac142f64a-Abstract.html}.

\bibitem[Chan et~al.(2022)Chan, Santoro, Lampinen, Wang, Singh, Richemond, McClelland, and Hill]{Chan_Santoro_Lampinen_Wang_Singh_Richemond_McClelland_Hill_2022}
Stephanie Chan, Adam Santoro, Andrew Lampinen, Jane Wang, Aaditya Singh, Pierre Richemond, James McClelland, and Felix Hill.
\newblock Data distributional properties drive emergent in-context learning in transformers.
\newblock \emph{Advances in Neural Information Processing Systems}, 35:\penalty0 18878–18891, 2022.

\bibitem[Cheung et~al.(2019)Cheung, Terekhov, Chen, Agrawal, and Olshausen]{cheung_superposition_2019}
Brian Cheung, Alex Terekhov, Yubei Chen, Pulkit Agrawal, and Bruno Olshausen.
\newblock Superposition of many models into one, June 2019.
\newblock URL \url{http://arxiv.org/abs/1902.05522}.
\newblock arXiv:1902.05522 [cs].

\bibitem[Choi \& Li(2024)Choi and Li]{Choi_Li_2024}
Hyeong~Kyu Choi and Yixuan Li.
\newblock Picle: Eliciting diverse behaviors from large language models with persona in-context learning.
\newblock \penalty0 (arXiv:2405.02501), May 2024.
\newblock \doi{10.48550/arXiv.2405.02501}.
\newblock URL \url{http://arxiv.org/abs/2405.02501}.
\newblock arXiv:2405.02501 [cs].

\bibitem[Dai et~al.(2022)Dai, Sun, Dong, Hao, Sui, and Wei]{Dai_Sun_Dong_Hao_Sui_Wei_2022}
Damai Dai, Yutao Sun, Li~Dong, Yaru Hao, Zhifang Sui, and Furu Wei.
\newblock Why can gpt learn in-context? language models secretly perform gradient descent as meta optimizers.
\newblock \emph{arXiv preprint arXiv:2212.10559}, 2022.

\bibitem[Dinh et~al.(2022)Dinh, Zeng, Zhang, Lin, Rajput, Gira, Sohn, Papailiopoulos, and Lee]{Dinh_Zeng_Zhang_Lin_Rajput_Gira_Sohn_Papailiopoulos_Lee_2022}
Tuan Dinh, Yuchen Zeng, Ruisu Zhang, Ziqian Lin, Shashank Rajput, Michael Gira, Jy-yong Sohn, Dimitris Papailiopoulos, and Kangwook Lee.
\newblock Lift: Language-interfaced fine-tuning for non-language machine learning tasks.
\newblock \emph{arXiv preprint arXiv:2206.06565}, 2022.

\bibitem[Elhage et~al.(2022)Elhage, Hume, Olsson, Schiefer, Henighan, Kravec, Hatfield-Dodds, Lasenby, Drain, Chen, Grosse, McCandlish, Kaplan, Amodei, Wattenberg, and Olah]{elhage_toy_2022}
Nelson Elhage, Tristan Hume, Catherine Olsson, Nicholas Schiefer, Tom Henighan, Shauna Kravec, Zac Hatfield-Dodds, Robert Lasenby, Dawn Drain, Carol Chen, Roger Grosse, Sam McCandlish, Jared Kaplan, Dario Amodei, Martin Wattenberg, and Christopher Olah.
\newblock Toy {Models} of {Superposition}, September 2022.
\newblock URL \url{http://arxiv.org/abs/2209.10652}.
\newblock arXiv:2209.10652 [cs].

\bibitem[Falck et~al.(2024)Falck, Wang, and Holmes]{Falck_Wang_Holmes_2024}
Fabian Falck, Ziyu Wang, and Chris Holmes.
\newblock Is in-context learning in large language models bayesian? a martingale perspective.
\newblock \penalty0 (arXiv:2406.00793), June 2024.
\newblock URL \url{http://arxiv.org/abs/2406.00793}.
\newblock arXiv:2406.00793 [cs, stat].

\bibitem[Giannou et~al.(2023)Giannou, Rajput, Sohn, Lee, Lee, and Papailiopoulos]{Giannou_Rajput_Sohn_Lee_Lee_Papailiopoulos_2023}
Angeliki Giannou, Shashank Rajput, Jy-yong Sohn, Kangwook Lee, Jason~D. Lee, and Dimitris Papailiopoulos.
\newblock Looped transformers as programmable computers, 2023.

\bibitem[Grazzi et~al.(2024)Grazzi, Siems, Schrodi, Brox, and Hutter]{Grazzi_Siems_Schrodi_Brox_Hutter_2024}
Riccardo Grazzi, Julien Siems, Simon Schrodi, Thomas Brox, and Frank Hutter.
\newblock Is mamba capable of in-context learning?
\newblock \penalty0 (arXiv:2402.03170), April 2024.
\newblock \doi{10.48550/arXiv.2402.03170}.
\newblock URL \url{http://arxiv.org/abs/2402.03170}.
\newblock arXiv:2402.03170 [cs].

\bibitem[Gu \& Dao(2023)Gu and Dao]{Gu_Dao_2023}
Albert Gu and Tri Dao.
\newblock Mamba: Linear-time sequence modeling with selective state spaces.
\newblock \penalty0 (arXiv:2312.00752), December 2023.
\newblock \doi{10.48550/arXiv.2312.00752}.
\newblock URL \url{http://arxiv.org/abs/2312.00752}.
\newblock arXiv:2312.00752 [cs].

\bibitem[Hendel et~al.(2023)Hendel, Geva, and Globerson]{hendel_-context_2023}
Roee Hendel, Mor Geva, and Amir Globerson.
\newblock In-{Context} {Learning} {Creates} {Task} {Vectors}, October 2023.
\newblock URL \url{http://arxiv.org/abs/2310.15916}.
\newblock arXiv:2310.15916 [cs].

\bibitem[Ilharco et~al.(2023)Ilharco, Ribeiro, Wortsman, Gururangan, Schmidt, Hajishirzi, and Farhadi]{ilharco_editing_2023}
Gabriel Ilharco, Marco~Tulio Ribeiro, Mitchell Wortsman, Suchin Gururangan, Ludwig Schmidt, Hannaneh Hajishirzi, and Ali Farhadi.
\newblock Editing {Models} with {Task} {Arithmetic}, March 2023.
\newblock URL \url{http://arxiv.org/abs/2212.04089}.
\newblock arXiv:2212.04089 [cs].

\bibitem[Janus(2022)]{janus}
Janus.
\newblock Simulators, 2022.
\newblock URL \url{https://www.lesswrong.com/posts/vJFdjigzmcXMhNTsx/}.

\bibitem[Lampinen et~al.(2022)Lampinen, Dasgupta, Chan, Matthewson, Tessler, Creswell, McClelland, Wang, and Hill]{Lampinen_Dasgupta_Chan_Matthewson_Tessler_Creswell_McClelland_Wang_Hill_2022}
Andrew~K Lampinen, Ishita Dasgupta, Stephanie~CY Chan, Kory Matthewson, Michael~Henry Tessler, Antonia Creswell, James~L McClelland, Jane~X Wang, and Felix Hill.
\newblock Can language models learn from explanations in context?
\newblock \emph{arXiv preprint arXiv:2204.02329}, 2022.

\bibitem[Laskin et~al.(2022)Laskin, Wang, Oh, Parisotto, Spencer, Steigerwald, Strouse, Hansen, Filos, Brooks, et~al.]{Laskin_Wang_Oh_Parisotto_Spencer_Steigerwald_Strouse_Hansen_Filos_Brooks_etal._2022}
Michael Laskin, Luyu Wang, Junhyuk Oh, Emilio Parisotto, Stephen Spencer, Richie Steigerwald, DJ~Strouse, Steven Hansen, Angelos Filos, Ethan Brooks, et~al.
\newblock In-context reinforcement learning with algorithm distillation.
\newblock \emph{arXiv preprint arXiv:2210.14215}, 2022.

\bibitem[Li et~al.(2024)Li, Zhang, Do, Yue, and Chen]{Li_Zhang_Do_Yue_Chen_2024}
Tianle Li, Ge~Zhang, Quy~Duc Do, Xiang Yue, and Wenhu Chen.
\newblock Long-context llms struggle with long in-context learning.
\newblock \penalty0 (arXiv:2404.02060), April 2024.
\newblock \doi{10.48550/arXiv.2404.02060}.
\newblock URL \url{http://arxiv.org/abs/2404.02060}.
\newblock arXiv:2404.02060 [cs].

\bibitem[Lin \& Lee(2024)Lin and Lee]{lin_dual_2024}
Ziqian Lin and Kangwook Lee.
\newblock Dual {Operating} {Modes} of {In}-{Context} {Learning}, February 2024.
\newblock URL \url{http://arxiv.org/abs/2402.18819}.
\newblock arXiv:2402.18819 [cs].

\bibitem[Liu et~al.(2024)Liu, Ye, Xing, and Zou]{Liu_Ye_Xing_Zou_2024}
Sheng Liu, Haotian Ye, Lei Xing, and James Zou.
\newblock In-context vectors: Making in context learning more effective and controllable through latent space steering.
\newblock \penalty0 (arXiv:2311.06668), February 2024.
\newblock \doi{10.48550/arXiv.2311.06668}.
\newblock URL \url{http://arxiv.org/abs/2311.06668}.
\newblock arXiv:2311.06668 [cs].

\bibitem[Lyu et~al.(2022)Lyu, Min, Beltagy, Zettlemoyer, and Hajishirzi]{Lyu_Min_Beltagy_Zettlemoyer_Hajishirzi_2022}
Xinxi Lyu, Sewon Min, Iz~Beltagy, Luke Zettlemoyer, and Hannaneh Hajishirzi.
\newblock Z-icl: Zero-shot in-context learning with pseudo-demonstrations.
\newblock \emph{arXiv preprint arXiv:2212.09865}, 2022.

\bibitem[Min et~al.(2022)Min, Lyu, Holtzman, Artetxe, Lewis, Hajishirzi, and Zettlemoyer]{Min_Lyu_Holtzman_Artetxe_Lewis_Hajishirzi_Zettlemoyer_2022}
Sewon Min, Xinxi Lyu, Ari Holtzman, Mikel Artetxe, Mike Lewis, Hannaneh Hajishirzi, and Luke Zettlemoyer.
\newblock Rethinking the role of demonstrations: What makes in-context learning work?
\newblock \penalty0 (arXiv:2202.12837), October 2022.
\newblock \doi{10.48550/arXiv.2202.12837}.
\newblock URL \url{http://arxiv.org/abs/2202.12837}.
\newblock arXiv:2202.12837 [cs].

\bibitem[moire(2021)]{moire_2021}
moire.
\newblock Language models are multiverse generators, January 2021.
\newblock URL \url{https://generative.ink/posts/language-models-are-multiverse-generators/}.

\bibitem[Murahari et~al.(2022)Murahari, Jimenez, Yang, and Narasimhan]{Murahari_Jimenez_Yang_Narasimhan_2022}
Vishvak Murahari, Carlos~E. Jimenez, Runzhe Yang, and Karthik Narasimhan.
\newblock Datamux: Data multiplexing for neural networks.
\newblock \penalty0 (arXiv:2202.09318), November 2022.
\newblock \doi{10.48550/arXiv.2202.09318}.
\newblock URL \url{http://arxiv.org/abs/2202.09318}.
\newblock arXiv:2202.09318 [cs].

\bibitem[Nardo(2023)]{nardo2023the}
Cleo Nardo.
\newblock The waluigi effect (mega-post), 2023.
\newblock URL \url{https://www.lesswrong.com/posts/D7PumeYTDPfBTp3i7/the-waluigi-effect-mega-post}.

\bibitem[Olsson et~al.(2022)Olsson, Elhage, Nanda, Joseph, DasSarma, Henighan, Mann, Askell, Bai, Chen, et~al.]{Olsson_Elhage_Nanda_Joseph_DasSarma_Henighan_Mann_Askell_Bai_Chen_etal._2022}
Catherine Olsson, Nelson Elhage, Neel Nanda, Nicholas Joseph, Nova DasSarma, Tom Henighan, Ben Mann, Amanda Askell, Yuntao Bai, Anna Chen, et~al.
\newblock In-context learning and induction heads.
\newblock \emph{arXiv preprint arXiv:2209.11895}, 2022.

\bibitem[Oswald et~al.(2023)Oswald, Niklasson, Randazzo, Sacramento, Mordvintsev, Zhmoginov, and Vladymyrov]{oswald_transformers_2023}
Johannes~Von Oswald, Eyvind Niklasson, Ettore Randazzo, Joao Sacramento, Alexander Mordvintsev, Andrey Zhmoginov, and Max Vladymyrov.
\newblock Transformers {Learn} {In}-{Context} by {Gradient} {Descent}.
\newblock In \emph{Proceedings of the 40th {International} {Conference} on {Machine} {Learning}}, pp.\  35151--35174. PMLR, July 2023.
\newblock URL \url{https://proceedings.mlr.press/v202/von-oswald23a.html}.
\newblock ISSN: 2640-3498.

\bibitem[Panwar et~al.(2023)Panwar, Ahuja, and Goyal]{Panwar_Ahuja_Goyal_2023}
Madhur Panwar, Kabir Ahuja, and Navin Goyal.
\newblock In-context learning through the bayesian prism, June 2023.
\newblock URL \url{https://arxiv.org/abs/2306.04891v2}.

\bibitem[Park et~al.(2024)Park, Park, Xiong, Lee, Cho, Oymak, Lee, and Papailiopoulos]{Park_Park_Xiong_Lee_Cho_Oymak_Lee_Papailiopoulos_2024}
Jongho Park, Jaeseung Park, Zheyang Xiong, Nayoung Lee, Jaewoong Cho, Samet Oymak, Kangwook Lee, and Dimitris Papailiopoulos.
\newblock Can mamba learn how to learn? a comparative study on in-context learning tasks.
\newblock \penalty0 (arXiv:2402.04248), April 2024.
\newblock \doi{10.48550/arXiv.2402.04248}.
\newblock URL \url{http://arxiv.org/abs/2402.04248}.
\newblock arXiv:2402.04248 [cs].

\bibitem[Radford et~al.(2019)Radford, Wu, Child, Luan, Amodei, Sutskever, et~al.]{radford2019language}
Alec Radford, Jeffrey Wu, Rewon Child, David Luan, Dario Amodei, Ilya Sutskever, et~al.
\newblock Language models are unsupervised multitask learners.
\newblock \emph{OpenAI blog}, 1\penalty0 (8):\penalty0 9, 2019.

\bibitem[Raventos et~al.(2023)Raventos, Paul, Chen, and Ganguli]{raventos2023effects}
Allan Raventos, Mansheej Paul, Feng Chen, and Surya Ganguli.
\newblock The effects of pretraining task diversity on in-context learning of ridge regression.
\newblock In \emph{ICLR Workshop on Mathematical and Empirical Understanding of Foundation Models (ME-FoMo)}, 2023.

\bibitem[Reddy(2024)]{Reddy_2024}
Gautam Reddy.
\newblock The mechanistic basis of data dependence and abrupt learning in an in-context classification task.
\newblock In \emph{The Twelfth International Conference on Learning Representations}, 2024.
\newblock URL \url{https://openreview.net/forum?id=aN4Jf6Cx69}.

\bibitem[Reynolds \& McDonell(2021)Reynolds and McDonell]{reynolds_multiversal_2021}
Laria Reynolds and Kyle McDonell.
\newblock Multiversal views on language models, February 2021.
\newblock URL \url{http://arxiv.org/abs/2102.06391}.
\newblock arXiv:2102.06391 [cs].

\bibitem[Shanahan et~al.(2023)Shanahan, McDonell, and Reynolds]{shanahan2023role}
Murray Shanahan, Kyle McDonell, and Laria Reynolds.
\newblock Role play with large language models.
\newblock \emph{Nature}, 623\penalty0 (7987):\penalty0 493--498, 2023.

\bibitem[Shen et~al.(2024{\natexlab{a}})Shen, Fan, Pratt, Park, Wallingford, Kakade, Holtzman, Krishna, Farhadi, and Kusupati]{Shen_Fan_Pratt_Park_Wallingford_Kakade_Holtzman_Krishna_Farhadi_Kusupati_2024}
Ethan Shen, Alan Fan, Sarah~M. Pratt, Jae~Sung Park, Matthew Wallingford, Sham~M. Kakade, Ari Holtzman, Ranjay Krishna, Ali Farhadi, and Aditya Kusupati.
\newblock Superposed decoding: Multiple generations from a single autoregressive inference pass.
\newblock \penalty0 (arXiv:2405.18400), May 2024{\natexlab{a}}.
\newblock \doi{10.48550/arXiv.2405.18400}.
\newblock URL \url{http://arxiv.org/abs/2405.18400}.
\newblock arXiv:2405.18400 [cs].

\bibitem[Shen et~al.(2024{\natexlab{b}})Shen, Fan, Pratt, Park, Wallingford, Kakade, Holtzman, Krishna, Farhadi, and Kusupati]{shen2024superposed}
Ethan Shen, Alan Fan, Sarah~M Pratt, Jae~Sung Park, Matthew Wallingford, Sham~M Kakade, Ari Holtzman, Ranjay Krishna, Ali Farhadi, and Aditya Kusupati.
\newblock Superposed decoding: Multiple generations from a single autoregressive inference pass.
\newblock \emph{arXiv preprint arXiv:2405.18400}, 2024{\natexlab{b}}.

\bibitem[Todd et~al.(2024)Todd, Li, Sharma, Mueller, Wallace, and Bau]{Todd_Li_Sharma_Mueller_Wallace_Bau_2024}
Eric Todd, Millicent~L. Li, Arnab~Sen Sharma, Aaron Mueller, Byron~C. Wallace, and David Bau.
\newblock Function vectors in large language models.
\newblock \penalty0 (arXiv:2310.15213), February 2024.
\newblock \doi{10.48550/arXiv.2310.15213}.
\newblock URL \url{http://arxiv.org/abs/2310.15213}.
\newblock arXiv:2310.15213 [cs].

\bibitem[Touvron et~al.(2023)Touvron, Martin, Stone, Albert, Almahairi, Babaei, Bashlykov, Batra, Bhargava, Bhosale, et~al.]{touvron2023llama}
Hugo Touvron, Louis Martin, Kevin Stone, Peter Albert, Amjad Almahairi, Yasmine Babaei, Nikolay Bashlykov, Soumya Batra, Prajjwal Bhargava, Shruti Bhosale, et~al.
\newblock Llama 2: Open foundation and fine-tuned chat models.
\newblock \emph{arXiv preprint arXiv:2307.09288}, 2023.

\bibitem[Wei et~al.(2022)Wei, Chen, and Ma]{Wei_Chen_Ma_2022}
Colin Wei, Yining Chen, and Tengyu Ma.
\newblock Statistically meaningful approximation: a case study on approximating turing machines with transformers.
\newblock \emph{Advances in Neural Information Processing Systems}, 35:\penalty0 12071–12083, 2022.

\bibitem[Wei et~al.(2023)Wei, Wei, Tay, Tran, Webson, Lu, Chen, Liu, Huang, Zhou, et~al.]{Wei_Wei_Tay_Tran_Webson_Lu_Chen_Liu_Huang_Zhou_etal._2023}
Jerry Wei, Jason Wei, Yi~Tay, Dustin Tran, Albert Webson, Yifeng Lu, Xinyun Chen, Hanxiao Liu, Da~Huang, Denny Zhou, et~al.
\newblock Larger language models do in-context learning differently.
\newblock \emph{arXiv preprint arXiv:2303.03846}, 2023.

\bibitem[Wu et~al.(2024)Wu, Zou, Chen, Braverman, Gu, and Bartlett]{Wu_Zou_Chen_Braverman_Gu_Bartlett_2024}
Jingfeng Wu, Difan Zou, Zixiang Chen, Vladimir Braverman, Quanquan Gu, and Peter~L Bartlett.
\newblock How many pretraining tasks are needed for in-context learning of linear regression?
\newblock In \emph{International Conference on Learning Representations (ICLR)}, 2024.

\bibitem[Xie et~al.(2022)Xie, Raghunathan, Liang, and Ma]{Xie_Raghunathan_Liang_Ma_2022}
Sang~Michael Xie, Aditi Raghunathan, Percy Liang, and Tengyu Ma.
\newblock An explanation of in-context learning as implicit bayesian inference.
\newblock \penalty0 (arXiv:2111.02080), July 2022.
\newblock \doi{10.48550/arXiv.2111.02080}.
\newblock URL \url{http://arxiv.org/abs/2111.02080}.
\newblock arXiv:2111.02080 [cs].

\bibitem[Yang et~al.(2023)Yang, Lee, Nowak, and Papailiopoulos]{Yang_Lee_Nowak_Papailiopoulos_2023}
Liu Yang, Kangwook Lee, Robert Nowak, and Dimitris Papailiopoulos.
\newblock Looped transformers are better at learning learning algorithms.
\newblock \emph{arXiv preprint arXiv:2311.12424}, 2023.

\bibitem[Zeng et~al.(2024)Zeng, Kang, Chen, Koo, and Lee]{Zeng_Kang_Chen_Koo_Lee_2024}
Yuchen Zeng, Wonjun Kang, Yicong Chen, Hyung~Il Koo, and Kangwook Lee.
\newblock Can mllms perform text-to-image in-context learning?
\newblock \penalty0 (arXiv:2402.01293), April 2024.
\newblock \doi{10.48550/arXiv.2402.01293}.
\newblock URL \url{http://arxiv.org/abs/2402.01293}.
\newblock arXiv:2402.01293 [cs].

\bibitem[Zhang et~al.(2023)Zhang, Zhang, Yang, and Wang]{Zhang_Zhang_Yang_Wang_2023}
Yufeng Zhang, Fengzhuo Zhang, Zhuoran Yang, and Zhaoran Wang.
\newblock What and how does in-context learning learn? bayesian model averaging, parameterization, and generalization.
\newblock \penalty0 (arXiv:2305.19420), October 2023.
\newblock \doi{10.48550/arXiv.2305.19420}.
\newblock URL \url{http://arxiv.org/abs/2305.19420}.
\newblock arXiv:2305.19420 [cs, stat].

\bibitem[Zhou et~al.(2022)Zhou, Nova, Larochelle, Courville, Neyshabur, and Sedghi]{Zhou_Nova_Larochelle_Courville_Neyshabur_Sedghi_2022}
Hattie Zhou, Azade Nova, Hugo Larochelle, Aaron Courville, Behnam Neyshabur, and Hanie Sedghi.
\newblock Teaching algorithmic reasoning via in-context learning.
\newblock \emph{arXiv preprint arXiv:2211.09066}, 2022.

\bibitem[Zhou et~al.(2023)Zhou, Bradley, Littwin, Razin, Saremi, Susskind, Bengio, and Nakkiran]{Zhou_Bradley_Littwin_Razin_Saremi_Susskind_Bengio_Nakkiran_2023}
Hattie Zhou, Arwen Bradley, Etai Littwin, Noam Razin, Omid Saremi, Josh Susskind, Samy Bengio, and Preetum Nakkiran.
\newblock What algorithms can transformers learn? a study in length generalization.
\newblock \emph{arXiv preprint arXiv:2310.16028}, 2023.

\end{thebibliography}
\bibliographystyle{iclr2025_conference}

\newpage
\appendix

\section{Notations}
\begin{table*}[h]
    \centering
    \begin{tabular}{cc}
        \hline
        \textbf{Notation} & \textbf{Description} \\
        \hline
        $K$ & Number of tasks \\
        $l$ & Length of a task's output \\
        $\ell$ & layer $\ell$ for a model \\
        $m$ & Number of in-context examples \\
        $n$ & Length of each in-context example \\
        $\cV$ & Token vocabulary \\
        $\vg_i(\cdot)$ & Operation performed by Task $i$ \\ 
        $\vx^{(j)}$ & Data for example $j$ \\
        $\vy^{(j)}$ & Label for example $j$ \\
        $\vs_m$ & $m$ in-context examples \\
        $\vf(\cdot)$ & Model (predictor)\\
        $\vp$  & Positional encodings\\
        \hline
    \end{tabular}
    \label{tab:notation}
\end{table*}
\newpage

\section{Implementation details on calculating probabilities}
\label{app:prob_details}
In this section we provide details on how we calculate probabilities of different outputs given a prompt in our setting.

\paragraph{Notations.} Let $\mathcal{V}$ be the token vocabulary, $M$ be an LLM, $T$ be the tokenizer. We use ``...'' to represent a string, \texttt{<...>} to represent a single token where the content within the angle brackets is an integer representing token's index in vocabulary. For example, token \texttt{<266>} corresponds to ``at''. We use [\texttt{<...>, ..., \texttt{<...>}}] to represent a sequence of tokens. Given a tokenizer, we use two functions \texttt{tok}($\cdot$) and \texttt{detok}($\cdot$) to tokenize strings and detokenize tokens. For example $\texttt{tok}(\text{``superposition''})=\text{[\texttt{<9712>},\texttt{<3571>}]}$ and $\texttt{detok}\text{([\texttt{<16>},\texttt{<10>},\texttt{<16>},\texttt{<28>},\texttt{<17>}}])=\text{``1+1=2''}$.

In our in-context learning setting, an input string consists of in-context examples (separated by the delimiter ``\texttt{\textbackslash n}'') and a query. For example, an example prompt can be ``1+1=2\texttt{\textbackslash n}2+2=4\texttt{\textbackslash n}3+3=''.

We view an LLM as a next-token predictor and there is a corresponding $\pr(\cdot|\cdot)$ such that given a sequence of tokens $[v_1,...,v_M]$ where $v_j\in \mathcal{V}$, $\pr(u\mid[v_1,...,v_M])$ measures the probability of the next token being $u$ where $u\in \mathcal{V}$.

\paragraph{Measuring the probabilities of task answers.} Let $I$ be the input prompt. For example, in the example in Figure \ref{fig:llama3_demo} (left), the prompt is ``11+26\texttt{-\textgreater}37\texttt{\textbackslash n}33+13\texttt{-\textgreater}quarante-six\texttt{\textbackslash n}
...30+25\texttt{-\textgreater}fifty-five\texttt{\textbackslash n}91+83\texttt{-\textgreater}''. We consider four tasks: 1) numerical addition, 2) addition in English, 3) addition in French and 4) addition in Spanish. The corresponding task answers (the output of correctly performing task on the query) are ``174'', ``one hundred and seventy-four'', ``cent soixante-quatorze'' and ``ciento setenta y cuatro'', respectively. We want to measure the probability of each task answer.

Let $o$ be a task answer in string. Let $[v_1,...,v_M]:=\texttt{tok}(I)$ and let $[u_1,...,u_N]:=\texttt{tok}(o)$. Then the probability of the task answer $o$ given prompt $I$ can be calculated as 
\begin{equation}
\label{eq:naive_calc}
\pr(u_1\mid[v_1,...,v_M])\prod_{j=2}^{N}\pr(u_j\mid [v_1,...,v_M,u_1,...,u_{j-1}]).
\end{equation}

\newpage

\section{Implementation details on task vectors}
\label{app:task_vector_details}
We use the task vector definition from \citet{hendel_-context_2023}. For example, for task \texttt{copy(op1)} in Figure \ref{fig:gpt3_demo} (left), the procedure to collect the task vector consists of
\begin{enumerate}
    \item Collect a dataset of $100$ ICL sample prompts. Each prompt consists of $m=60$ in-context examples of a particular task and a query $\vx^{(m+1)}$. Each task example $(\vx^{(j)}, \vy^{(j)})$ follows the form ``\texttt{\{op1\}@\{op2\}$=$\{op1\}}'', where $\vx^{(j)}$ has the form ``\texttt{\{op1\}@\{op2\}}$=$'' and $\vy^{(j)}$ is performing task \texttt{copy(op1)} on $\vx^{(j)}$, namely \texttt{op1}.
    \item  For each prompt $\equiv\vs=[\vx^{(1)}, \vy^{(1)},...,\vx^{(m)}, \vy^{(m)}, \vx^{(m+1)}]$ in the dataset, we feed $\vs$ into the transformer model $\vf$, and extract the feature (which is a vector) at the last ``\texttt{$=$}'' token in layer $\ell$. Call this vector $\vf(\vs;\ell)$. Then we average  $\vf(\vs;\ell)$ across all prompt $\vs$ to get $\vv(\ell)$ for layer $\ell$.
    \item Now for each layer $\ell$ we have a vector $\vv(\ell)$. We run a forward pass with one query $\vx$ in the form ``\texttt{\{op1\}@\{op2\}$=$}'' and we patch in $\vv(\ell)$ at the ``\texttt{$=$}'' token position in layer $\ell$, simulating the effect of a complete context. We repeat this process $100$ times for different query $\vx$ and get an accuracy $acc_\ell$ of performing task \texttt{copy(op1)} with vector $\vv(\ell)$.
    \item The task vector layer $\ell^*$ is selected by $$\ell^*=\arg\max_{\ell} acc_\ell,$$ and we define the task vector $\text{V}_{\texttt{copy(op1)}}:=\vv = \vv(\ell^*)$.
\end{enumerate}
Here we record the task vector layer where task vectors are extracted in Section \ref{sec:taskvectors}.
\begin{table*}[h]
    \centering
    \begin{tabular}{|c|c|}
        \hline
        \textbf{Task} & \textbf{Task vector layer} \\
        \hline
        \texttt{copy(op1)}, \texttt{copy(op2)}, \texttt{op1+op2} & 14 \\
        \hline
        \texttt{to\_de(op1)}, \texttt{to\_fr(op1)}, \texttt{to\_it(op1)} & 19 \\
        \hline
    \end{tabular}
    \caption{Task vector layer for various tasks considered in Section \ref{sec:taskvectors}.}
    \label{tab:task_vec_details}
\end{table*}

\newpage

\section{Construction displaying superposition}\label{app:construction}

In this section we construct a Transformer that is performing superposition of multiple tasks at inference. For this purpose, we first construct a Transformer that copies from $n$-tuple in-context examples the $i$-th one, as well as any function using the ReLU layers. We then create indicator vectors, for each task, which show whether a specific task is present in-context or not. As a last step, we combine these indicator vectors to create the superposition of different tasks. Notice that using the parallel heads of the transformer architecture we can process each task independently until the last step in which the predictions are combined.

\subsection{Overview}
Here we provide a brief overview of  how the construction is implemented, while latter we provide the corresponding details.

  \paragraph{Prediction based on multiple tasks.} Assume that we are given $m$ in-context samples $(\vx_1^{(j)},\hdots,\vx_{n-2}^{(j)}, \text{`}\eq\text{'}, \vy^{(j)})_{j=1}^m$  where `$\eq$' represents a specific value used only for preceding the label, and a set of $k$ different Transformers $\text{TF}_i$ which can implement the $T$ different desired tasks, where each deterministic task is denoted as $\vg_i(\vx^{(j)})$ with $i\in[k]$ and $j\in[m]$, \textit{i.e.} $\vy^{(j)}=\vg_i(\vx^{(j)})$ for some task $i$ dependent on sample $j$. Using the weights of each $\text{TF}_i$, we can compute outputs of the following form:
\begin{equation*}
    \begin{bmatrix}
        \hdots & \vx_1^{(j)}  & \hdots & \vx_{n-2}^{(j)} & \eq & \vy^{(j)} & \hdots\\
        \hdots  & \vZero & \hdots & \vZero & \vZero & \vZero & \hdots \\
          & \vdots &  & \vdots & \vdots & \vdots & \\
        \hdots & \vZero & \hdots & \vZero & \vZero & \vZero & \hdots
\end{bmatrix} \to \begin{bmatrix}
        \hdots & \vx_1^{(j)} & \hdots & \vx_{n-2}^{(j)} & \eq & \vy^{(j)} & \hdots\\
        \hdots & \vZero & \hdots & \vZero & \vZero &\|\vg_1(\vx^{(j)}) - \vy^{(j)}\|_1& \hdots \\
        & \vdots & & \vdots & \vdots & \vdots & \\
        \hdots & \vZero & \hdots & \vZero  & \vZero &\|\vg_T(\vx^{(j)}) - \vy^{(j)}\|_1& \hdots
\end{bmatrix}
\end{equation*}
We use the $l_1$ norm to aggregate the prediction, in case that the task is multi-dimensional. 
These differences are used to identify tasks, as $\|\vg_i(\vx^{(j)})-\vy^{(j)}\|_1\approx 0$ for $\vy^{(j)}$ coming from task $i$. Different heads at each layer in the model are used to execute each of the tasks in parallel using the weights from $\text{TF}_i$. In Appendix \ref{app:construction} we construct tasks where an arbitrary function $\vg_i(\vx^{(j)}_l)$ is implemented using $\relu$s for some fixed $l$ that is task-specific. 

\paragraph{Creating task identifiers.} Having the differences between the implemented function and the label, we first use the $\relu$s to clean up the vectors $\vv_k$ so that only the positions in each vector that are associated with a task are maintained and the rest are set to $1$\footnote{This step is not mandatory, but it ensures that we have no values over which we have no control. We leave as future work an error analysis on how these values could affect the task identifiers}. We thus create the vectors $(\vv'_k)_i =  \|\vg_k(\vx_{1:l-1}^{(j)}) - \vx_l^{(j)}\|_1$ and $(\vv'_*)_*=1$ otherwise. Now we use ReLUs to threshold and create an indicator vectors $\vOne_{\{\|\vg_k(\vx_{1:l-1}^{(j)}) - \vx_l^{(j)}\|_1 \approx 0\}}$ which identify the task, \textit{i.e.} these are task identifiers. Notice that if the task is correctly predicted then the difference should be close to $0$ (up to some error), while if the task is not identified the corresponding value would not be $0$; the rest of the rows would be $1$. We have created one vector for each task, which has $1$ in the position of the corresponding task if the task was identified in the context.\looseness-1
\paragraph{Averaging and task superposition.} As a last step, we average all the task identifiers and place the result in the last column, in which the next prediction will happen. We then use the averaged task identifier to weight the prediction of each task based on it, as in task superposition. If the task has been identified multiple times in the context, it would be assigned a higher weight/probability.

\subsection{Task Identification}
The first task for performing task superposition based on in-context examples is to define a set of tasks that the model is able to implement. 

First, the outputs of tasks need to be identified.

\begin{lemma}\label{lem:ident}
    Consider the following input 
    \begin{equation*}
    \mX = \begin{bmatrix}
        \vx_1^{(1)} & \hdots &\vy^{(j-1)}& \vx_1^{(j)}  & \hdots & \vx_{n-2}^{(j)} &\eq& \vy^{(j)} & \vx_1^{(j+1)} & \hdots \\
        0& \hdots&0  & 0 & \hdots & 0& 1 & 0 & 0 & \hdots \\
        \vZero & \hdots&\vZero  & \vZero & \hdots & \vZero& \vZero & \vZero & \vZero & \hdots
    \end{bmatrix}\;,
\end{equation*} 
where $\vx_i^{(j)}\in\mathbb{R}^{d-1} $ before the positional encodings are added, with one additional dimension that represents if the symbol is an `equals' symbol. Then, a 1-layer transformer with a single attention head and embedding dimension $\cO(d + \log(mn))$ can output
\begin{equation*}
    \mX = \begin{bmatrix}
        \vx_1^{(1)} & \hdots &\vy^{(j-1)}& \vx_1^{(j)}  & \hdots & \vx_{n-2}^{(j)} &\eq& \vy^{(j)} & \vx_1^{(j+1)} & \hdots \\
        0& \hdots& 1 & 0 & \hdots & 0 & 0 & 1 & 0 & \hdots \\
    \end{bmatrix}
\end{equation*}
\end{lemma}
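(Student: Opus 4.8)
The plan is to recognize that the map required by the lemma is exactly a \emph{shift by one position} of the equals-indicator. Writing $I_p$ for the value of the indicator row at position $p$ (so that $I_p=1$ iff position $p$ carries the `$\eq$' symbol), the target output row has value $I_{p-1}$ at every position $p$, which equals $1$ precisely at the label positions $\vy^{(j)}$ that immediately follow an `$\eq$'. Hence a single attention head that makes every position attend to its immediate predecessor and copies over the indicator coordinate produces the claimed output. First I would augment the $d$-dimensional token embedding (content in $\mathbb{R}^{d-1}$ together with the single `$\eq$' flag) with a positional encoding $\vp_p$ occupying $\cO(\log(mn))$ coordinates, which accounts for the stated embedding dimension $\cO(d+\log(mn))$: the sequence has length $\Theta(mn)$, so $\cO(\log(mn))$ coordinates suffice to index all positions. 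I would take a sinusoidal encoding with $\cO(\log(mn))$ frequencies, for which $\langle \vp_a,\vp_b\rangle$ depends only on $a-b$ and is strictly maximized at $a=b$.

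Next I would build the query/key maps so that the attention logits peak at the predecessor. Using that sinusoidal encodings admit a fixed linear (block-rotation) map $R$ with $R\vp_p=\vp_{p-1}$, I set $\mW_Q$ to place $R\vp_p$ into the query and $\mW_K$ to place $\vp_{p'}$ into the key, so the score between positions $p$ and $p'$ is $\langle \vp_{p-1},\vp_{p'}\rangle$, uniquely maximized at $p'=p-1$. Scaling these weights by a temperature growing with $mn$ concentrates the softmax arbitrarily close to a one-hot attention on position $p-1$; for an exact statement one may instead invoke hard (arg-max) attention, or observe that the residual error can be driven below any fixed threshold and is in any case removed by the $\relu$ thresholding of the subsequent layers in the full construction. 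I would then set the value map $\mW_V$ to read off the scalar indicator coordinate and the output projection to deposit the attended value $I_{p-1}$ into the spare, initially-$\vZero$ scratch coordinate, leaving all content coordinates and the original `$\eq$' flag untouched through the residual connection. After this layer the scratch row holds exactly $I_{p-1}$, i.e.\ the shifted indicator displayed in the lemma's output.

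Finally I would dispatch the two boundary details. The very first position has no predecessor, so I arrange it to attend to itself (or to a padding slot whose indicator is $0$), contributing no spurious $1$; this is harmless since the first token is never a label. And because attention only adds into the residual stream, I compute the shifted indicator in the fresh scratch coordinate rather than overwriting the `$\eq$' flag in place, so no subtraction is needed and the output is read off the scratch row. I expect the main obstacle to be quantitative rather than structural: guaranteeing that the softmax-based ``attend to predecessor'' is sharp enough that the produced values equal $1$ (resp.\ $0$) up to negligible error \emph{uniformly} over all $\Theta(mn)$ positions, while using only $\cO(\log(mn))$ positional coordinates. This is where I would concentrate the care, choosing the frequencies and the temperature so that the logit gap between the true predecessor and every other position stays bounded below, and then either passing to the hard-attention limit or propagating the resulting $\eps$-error into the downstream $\relu$ thresholds.
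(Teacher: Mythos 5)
Your proposal is correct and follows essentially the same route as the paper: both shift the equals-indicator one position to the right by having each token attend to its predecessor via $\cO(\log(mn))$-dimensional positional encodings (the paper stores the shifted encoding $\vp_{p-1}$ as an extra embedding row and uses binary $\{-1,1\}^{\log(mn)}$ codes, whereas you produce it with a rotation of sinusoidal encodings), then copy the flag coordinate through the value map, using a large logit scale to sharpen the softmax. Your choice to deposit the shifted indicator in a fresh scratch coordinate is a small improvement: the paper overwrites the flag row in place and then needs a second attention head to clear the original $1$s, which quietly contradicts the lemma's ``single attention head'' claim, while your version genuinely uses one head, and your treatment of the softmax sharpness and the first-position boundary case is more careful than the paper's.
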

\begin{proof}
    With positional encodings appended, let the input have the following structure:

    \begin{equation}
        \mX = \begin{bmatrix}
            \vx_1^{(1)} & \hdots & \vx_1^{(j)} & \vx_2^{(j)} & \hdots & \vx_{n-2}^{(j)} & \eq & \vy^{(j)} & \vx_1^{(j+1)} & \hdots \\
            0& \hdots&0  & 0 & \hdots & 0& 1 & 0 & 0 & \hdots \\
            \vp_{n+1} & \hdots & \vp_{jn+1} & \vp_{jn+2} & \hdots & \vp_{jn+n-2} & \vp_{jn+n-1} & \vp_{jn+n} & \vp_{(j+1)n+1} & \hdots \\
            \vp_{n} & \hdots & \vp_{jn} & \vp_{jn+1} & \hdots & \vp_{jn+n-3} & \vp_{jn+n-2} & \vp_{jn+n-1} & \vp_{(j+1)n} & \hdots \\
        \end{bmatrix}
    \end{equation}   
    To rotate the second row one position to the right, use the following matrices.

    \begin{align*}
            \mW_Q &= \begin{bmatrix}
            \mZero & 0 & \mZero & \iden
        \end{bmatrix} \\
        \mW_K &= \begin{bmatrix}
            \mZero & 0 & C\iden & \mZero
        \end{bmatrix} \\
        \mW_V &= \begin{bmatrix}
            \mZero & \mZero & \mZero & \mZero \\
            \mZero &      1 & \mZero & \mZero \\
            \mZero & \mZero & \mZero & \mZero \\
            \mZero & \mZero & \mZero & \mZero \\
        \end{bmatrix}
    \end{align*}
    The pair $\mW_Q$ and $\mW_K$ attend tokens to the token directly to the right. The value matrix simply filters only the second row in-place. A second head can used to clear the original $1$s, resulting in

    \begin{equation}
        \mX = \begin{bmatrix}
            \vx_1^{(1)} & \hdots & \vx_1^{(j)} & \vx_2^{(j)} & \hdots & \vx_{n-2}^{(j)} & \eq & \vy^{(j)} & \vx_1^{(j+1)} & \hdots \\
            0 & \hdots& 0 & 0 & \hdots & 0& 0 & 1 & 0 & \hdots \\
            \vp_{n+1} & \hdots & \vp_{jn+1} & \vp_{jn+2} & \hdots & \vp_{jn+n-2} & \vp_{jn+n-1} & \vp_{jn+n} & \vp_{(j+1)n+1} & \hdots \\
            \vp_{n} & \hdots & \vp_{jn} & \vp_{jn+1} & \hdots & \vp_{jn+n-3} & \vp_{jn+n-2} & \vp_{jn+n-1} & \vp_{(j+1)n} & \hdots \\
        \end{bmatrix}\;,
    \end{equation}
    as desired.
\end{proof}

\paragraph{Implementation of functions.} To illustrate a set of operations that could be implemented with a transformer, we consider approximating functions as sums of ReLUs;  we use a result from \citet{bai2023transformers}, which we present below.
\begin{definition}[Definition 12 in \cite{bai2023transformers}]\label{def:approx}
 A function $g:\R^k\to\R$ is $(\epsilon,R,M,C)$-approximable by sum of ReLUs, if there exists an "(M,C)-sum of ReLUs" function
 \begin{equation*}
     f_{M,C}(\vx) =\sum_{m=1}^M c_m\relu(\va_m^\top[\vx;1]) \text{ with } \sum_{m=1}^M\abs{c_m} \leq C, \; \max_{m\in[M]}\norm{\va_m}_1 \leq 1, \; \va_m\in\R^{k+1},\;c_m\in\R
 \end{equation*}
 such that $\sup_{\vx\in[-R,R]^k]}\abs{g(\vx)-f_{(M,C)}(\vx)}\leq \epsilon$.
\end{definition}
\begin{definition}[Definition A.1 in \cite{bai2023transformers}]
    We say a function $g:\R^k\to\R$ is $(R,C_l)$-smooth if for $s=\left \lceil  (k-1)/2 \right \rceil+2$, $g\in C^2$\footnote{$C^i$ denotes that a function is $i$ times differentiable with continuous $i$-th derivative.} on $[-R,R]^k$ and 
    \begin{equation*}
        \sup_{\vx\in[-R,R]^k}\norm{ \nabla^{i}g(\vx)}_{\infty} =\sup_{\vx\in[-R,R]^k}\max_{j1,\hdots,j_i\in[k]}\abs{\partial_{x_{j1},\hdots,x_{ji}}g(\vx)}\leq L_i
    \end{equation*}
for all $i=0,1,2$ with $\max_{0\leq i\leq s} L_iR^i \leq C_l$.
\end{definition}
\begin{proposition}[Proposition A.1 in \cite{bai2023transformers}]\label{prop:relus}
    For any $\epsilon >0$, $R\geq 1$, $C_l >0$, we have that: Any $(R,C_l)$-smooth function, $g:\R\to\R$ is $(\epsilon,R,M,C)$-approximable by sum of ReLUs (Definition \ref{def:approx}) with $M\leq C(k)C_l^2\log(1+C_l\epsilon)/\epsilon^2$.
\end{proposition}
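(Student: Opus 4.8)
The plan is to treat this as the univariate case ($k=1$, so $s=2$ and $g\in C^2$) and to prove it by combining an exact integral representation of $g$ as a continuum of ReLUs with a random-sampling (Maurey-type) discretization. First I would fix $x_0=-R$ and apply Taylor's theorem with integral remainder,
\begin{equation*}
g(x) = g(-R) + g'(-R)(x+R) + \int_{-R}^{x}(x-t)\,g''(t)\,\dt.
\end{equation*}
Since $\relu(x-t)=x-t$ for $t\le x$ and $\relu(x-t)=0$ for $t>x$, the remainder equals $\int_{-R}^{R}\relu(x-t)\,g''(t)\,\dt$, so on $[-R,R]$ the function $g$ is an affine part plus a $g''$-weighted integral of ReLUs. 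The affine piece $g(-R)+g'(-R)(x+R)$ is itself a bounded-coefficient combination of $\cO(1)$ ReLUs on $[-R,R]$ (using $x+R\ge 0$ there and writing the constant as a difference of two ReLUs), so the entire approximation problem reduces to discretizing the integral term.

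For the integral, set $Z=\int_{-R}^{R}\abs{g''(t)}\,\dt$ and define the density $p(t)=\abs{g''(t)}/Z$ on $[-R,R]$, so that
\begin{equation*}
\int_{-R}^{R}\relu(x-t)\,g''(t)\,\dt = Z\,\E_{t\sim p}\!\left[\mathrm{sign}(g''(t))\,\relu(x-t)\right].
\end{equation*}
Drawing $t_1,\dots,t_M$ i.i.d.\ from $p$ and forming the empirical average $\hat f(x)=\tfrac{Z}{M}\sum_{m=1}^{M}\mathrm{sign}(g''(t_m))\,\relu(x-t_m)$ gives an unbiased estimator whose per-term magnitude is controlled by $Z$ and $R$. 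A second-moment bound then shows the pointwise mean-squared error is $\cO\!\big((ZR)^2/M\big)$. Because $Z\le 2R\,L_2$ and the smoothness hypothesis forces $L_2R^2\le C_l$, we get $ZR=\cO(C_l)$, so $M=\cO(C_l^2/\epsilon^2)$ samples already drive the pointwise error below $\epsilon$ — which is exactly the $1/\epsilon^2$ scaling appearing in the claim.

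To obtain the sup-norm guarantee that Definition~\ref{def:approx} requires, I would upgrade the pointwise bound to a uniform one over $[-R,R]$: each map $x\mapsto\relu(x-t)$ is $1$-Lipschitz, so it suffices to control the error on a finite net of $[-R,R]$ of cardinality $\cO(R/\delta)$ and then pay a Lipschitz term $L\delta$. A Bernstein/Hoeffding concentration bound at each net point, combined with a union bound over the net, produces the extra logarithmic factor and yields $M=\cO\!\big(C_l^2\log(1+C_l/\epsilon)/\epsilon^2\big)$; the probabilistic existence of one good draw then gives an explicit $(M,C)$-sum of ReLUs. The last step enforces the normalization $\norm{\va_m}_1\le 1$: by positive homogeneity $\relu(x-t_m)=(1+\abs{t_m})\,\relu\!\big(\tfrac{x-t_m}{1+\abs{t_m}}\big)$, where the rescaled argument has coefficient $\ell_1$-norm equal to $1$, and the factor $1+\abs{t_m}\le 1+R$ is absorbed into $c_m$ so that $\sum_m\abs{c_m}$ stays within the budget $C$. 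I expect the main obstacle to be precisely this bookkeeping: making the net-based union bound and the homogeneity rescaling mutually compatible, so that forcing the error down to $\epsilon$ does not secretly inflate either the coefficient budget $C$ or the net size beyond the claimed dependence on $C_l$.
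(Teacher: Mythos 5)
Your proof is correct, but note that the paper itself does not prove this statement: Proposition~\ref{prop:relus} is imported verbatim (as ``Proposition A.1'') from \cite{bai2023transformers} and used as a black box, so there is no in-paper argument to compare against. Your blind reconstruction is essentially the standard proof of this kind of result and, as far as the univariate case is concerned, matches the route taken in the cited source: the Taylor-with-integral-remainder identity $g(x)=g(-R)+g'(-R)(x+R)+\int_{-R}^{R}\relu(x-t)\,g''(t)\,\dt$ gives the exact continuum-of-ReLUs representation, Maurey-type sampling against the density $\abs{g''}/Z$ gives the $C_l^2/\epsilon^2$ rate via $ZR\leq 2L_2R^2\leq 2C_l$, the $\epsilon$-net plus Lipschitzness of $x\mapsto\relu(x-t)$ upgrades pointwise to sup-norm at the cost of the logarithmic factor, and the positive-homogeneity rescaling $\relu(x-t_m)=(1+\abs{t_m})\relu\bigl((x-t_m)/(1+\abs{t_m})\bigr)$ handles the normalization $\norm{\va_m}_1\leq 1$ while keeping $\sum_m\abs{c_m}=\cO(C_l)$. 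Two small observations. First, your derivation yields $M\leq C\,C_l^2\log(1+C_l/\epsilon)/\epsilon^2$; the factor $\log(1+C_l\epsilon)$ printed in the paper's restatement is a transcription error (the original proposition in \cite{bai2023transformers} indeed has $C_l/\epsilon$ inside the logarithm), so your version is the right one. Second, your argument covers only $k=1$, which is all the statement claims ($g:\R\to\R$), but the surrounding definition is phrased for general $k$; the multivariate case requires a different integral representation over ridge functions rather than the one-dimensional Taylor device, so your proof should be read as establishing exactly the univariate claim and no more.
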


\begin{lemma}\label{lem:generaltask}
    For any function $g:\R^k\to \R$ that is $(R,C_l)$-smooth, there exists a transformer 
    with two layers, one head and width $\cO(\log(n) + d)$, where $d$  satisfies the requirements of Prop. \ref{prop:relus}, such that given as input 
    \begin{equation*}
    \mX = \begin{bmatrix}
        \vx_1^{(1)} & \hdots &\vy^{(j-1)}& \vx_1^{(j)}  & \hdots & \vx_{n-2}^{(j)} &\eq& \vy^{(j)} & \vx_1^{(j+1)} & \hdots \\
        0& \hdots& 1 & 0 & \hdots & 0 & 0 & 1 & 0 & \hdots \\
        \vZero & \hdots&\vZero  & \vZero & \hdots & \vZero& \vZero & \vZero & \vZero & \hdots
    \end{bmatrix}\;,
    \end{equation*}
   it outputs
   \begin{equation*}
        \mX = \begin{bmatrix}   
            \vx_1^{(1)} & \hdots &\vy^{(j-1)}& \vx_1^{(j)}  & \hdots & \vx_{n-2}^{(j)} &\eq& \vy^{(j)} & \vx_1^{(j+1)} & \hdots \\
            * & \hdots& * & * & \hdots & * & * & * & * & \hdots \\
            \vZero & \hdots& \tilde{g}(\vx_i^{(j-1)})-\vy^{(j-1)}  & * & \hdots & * &* & \tilde{g}(\vx_i^{(j)})-\vy^{(j)}   & * & \hdots 
        \end{bmatrix}
    \end{equation*}
    where $\abs{\tilde{g}(x) -g(x)}\leq \epsilon$ and for some $i \in [1,\hdots,n-2]$.
\end{lemma}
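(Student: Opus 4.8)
The plan is to build the two-layer transformer of Lemma~\ref{lem:generaltask} by composing three operations: (i) extract the relevant input coordinate $\vx_i^{(j)}$ into the working rows, (ii) apply a ReLU-network approximation of $g$ via Proposition~\ref{prop:relus}, and (iii) subtract the stored label $\vy^{(j)}$. The starting configuration already has the label indicator in the second row (thanks to Lemma~\ref{lem:ident}), and the positional-encoding rows still carry the $\vp$ vectors used for shifting, so the construction can reuse those.

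First I would use the \emph{first attention layer} to gather the fixed task-relevant input token into the column where the label sits. Since $l$ (written $i$ in the statement) is task-specific and fixed, I can hard-code a query/key pair keyed on positional encodings $\vp_{jn+\cdot}$ — exactly as in Lemma~\ref{lem:ident} — so that the label column at position $jn+n$ attends to the column holding $\vx_i^{(j)}$ and copies it into a scratch block of the embedding. The value matrix selects only the relevant $d{-}1$ coordinates of $\vx_i^{(j)}$, placing them alongside the already-present $\vy^{(j)}$. This costs one head and width $\cO(d)$; the positional bookkeeping is identical to the shift already proven, so I would cite that mechanism rather than redo it.

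Next, I would use the \emph{feed-forward (ReLU) sublayer} of the second layer to compute $\tilde g(\vx_i^{(j)})$. By Proposition~\ref{prop:relus}, any $(R,C_l)$-smooth $g$ admits an $(M,C)$-sum-of-ReLUs approximant $f_{M,C}$ with $\sup\abs{g-f_{M,C}}\le\epsilon$ and $M\le C(k)C_l^2\log(1+C_l/\epsilon)/\epsilon^2$; I instantiate the first linear map to form the affine pre-activations $\va_m^\top[\vx_i^{(j)};1]$, apply $\relu$, and the second linear map to form $\sum_m c_m\relu(\cdot)=\tilde g(\vx_i^{(j)})$, writing it into the bottom working row. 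The width needed is $\cO(M)$, which is the $d$ appearing in the width bound $\cO(\log n + d)$, so the claimed resource usage is met. Finally, a linear (residual) term subtracts the co-located $\vy^{(j)}$, yielding $\tilde g(\vx_i^{(j)})-\vy^{(j)}$ in the designated entry, with the approximation error bounded by $\epsilon$ as required; entries marked $*$ are uncontrolled scratch values and need no tracking.

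\textbf{The main obstacle} I anticipate is the routing in step~(i): I must ensure the gather attends to the correct fixed coordinate $l$ \emph{and} that it fires only on genuine label columns, not on the interspersed $\vx^{(j')}$ columns of neighboring examples, which requires the positional encodings to be linearly separable enough for a sharp (large-$C$) softmax to produce a near-hard selection. A secondary subtlety is keeping the $\relu$ computation confined to a scratch block so it does not corrupt the token/positional rows that downstream layers (task identification and averaging) still rely on; I would handle this by reserving disjoint coordinate blocks in the $\cO(\log(mn)+d)$-dimensional embedding and using block-diagonal weight matrices throughout, so that the two-layer module acts as the identity on all rows except the designated output row.
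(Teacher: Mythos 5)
Your proposal is correct and matches the paper's construction in all essentials: both rely on the positional-encoding-keyed shift attention already set up in Lemma~\ref{lem:ident} together with the sum-of-ReLUs approximation of Proposition~\ref{prop:relus}, followed by a position-wise MLP that subtracts the co-located label to produce $\tilde{g}(\vx_i^{(j)})-\vy^{(j)}$. The only difference is the order of operations --- the paper evaluates $\tilde{g}$ at every column first and then uses attention to shift the resulting row by $s=n-i$ so that $\tilde{g}(\vx_i^{(j)})$ lands on the $\vy^{(j)}$ column, whereas you shift the raw input first and evaluate $\tilde{g}$ afterwards --- and this is immaterial to correctness and to the stated two-layer, one-head, $\cO(\log(n)+d)$-width budget.
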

\begin{proof}
    We consider that the positional encodings are added in the input and we have
    \begin{equation}
        \mX = \begin{bmatrix}
            \vx_1^{(1)} & \hdots & \vx_1^{(j)} & \vx_2^{(j)} & \hdots & \vx_{n-2}^{(j)} & \eq & \vy^{(j)} & \vx_1^{(j+1)} & \hdots \\
            0& \hdots&0  & 0 & \hdots & 0 & 0 & 1 & 0 & \hdots \\
            \vZero & \hdots & \vZero & \vZero & \hdots & \vZero & \vZero & \vZero & \vZero & \hdots \\
            1 & \hdots & 1 & 1 & \hdots & 1 & 1 & 1 & 1 & \hdots \\
            \vp_{n+1} & \hdots & \vp_{jn+1} & \vp_{jn+2} & \hdots & \vp_{jn+n-2} & \vp_{jn+n-1} & \vp_{jn+n} & \vp_{(j+1)n+1} & \hdots \\
            \vp_{n+1-s} & \hdots & \vp_{jn+1-s} & \vp_{jn+2-s} & \hdots & \vp_{jn+n-2-s} & \vp_{jn+n-1-s} & \vp_{jn+n-s} & \vp_{(j+1)n+1-s} & \hdots \\
        \end{bmatrix}
    \end{equation}
    where we fix some positional encodings $\vp_k$ where $\vp_k^\top \vp_k$ is larger than $\vp_k^\top \vp_l$ by some threshold for $k \neq l$. The encodings used here are the binary representations of $k \in \{-1,1\}^{\log(mn)}$. Further, we consider $1$s in the positions with the results of the task to differentiate the context of the task and the result of the task. Define $s=n-i$, the distance between the result and the associated value in the context.
    
    In the first layer, we use the MLP's to create $\tilde{g}$ according to Proposition \ref{prop:relus}
    \begin{equation}
        \mX = \begin{bmatrix}
            \vx_1^{(1)} & \hdots & \vx_1^{(j)} & \vx_2^{(j)} & \hdots & \vx_{n-2}^{(j)} & \eq & \vy^{(j)} & \vx_1^{(j+1)} & \hdots \\
            0& \hdots&0  & 0 & \hdots & 0 & 0 & 1 & 0 & \hdots \\
            \tilde{g}(\vx_1^{(1)}) & \hdots & \tilde{g}(\vx_1^{(j)}) & \tilde{g}(\vx_2^{(j)}) & \hdots & \tilde{g}(\vx_{n-2}^{(j)}) & \tilde{g}(\eq) & \tilde{g}(\vy^{(j)}) & \tilde{g}(\vx_1^{(j+1)}) & \hdots \\
            \vZero & \hdots & \vZero & \vZero & \hdots & \vZero & \vZero & \vZero & \vZero & \hdots \\
            1 & \hdots & 1 & 1 & \hdots & 1 & 1 & 1 & 1 & \hdots \\
            \vp_{n+1} & \hdots & \vp_{jn+1} & \vp_{jn+2} & \hdots & \vp_{jn+n-2} & \vp_{jn+n-1} & \vp_{jn+n} & \vp_{(j+1)n+1} & \hdots \\
            \vp_{n+1-s} & \hdots & \vp_{jn+1-s} & \vp_{jn+2-s} & \hdots & \vp_{jn+n-2-s} & \vp_{jn+n-1-s} & \vp_{jn+n-s} & \vp_{(j+1)n+1-s} & \hdots \\
        \end{bmatrix}
    \end{equation}

    The next operation is a shift of the sequence of $\tilde{g}(\cdot)$'s to the right by $s$. This will align the desired output $\tilde{g}(\vx_i^{(j)})$ with the observed output $\vy^{(j)}$. Consider the following weight matrices

    \begin{align}
        \mW_Q &= \begin{bmatrix}
            \hdots & 0 & \mZero & \iden
        \end{bmatrix} \\
        \mW_K &= \begin{bmatrix}
            \hdots & 0 & C\iden & \mZero
        \end{bmatrix} \\
        \mW_V &= \begin{bmatrix}
            \mZero & \mZero & \mZero & \hdots & \mZero \\
            \mZero & \mZero & \mZero & \hdots & \mZero \\
            \mZero & \mZero & \iden  & \hdots & \mZero \\
            \vdots & \vdots & \vdots &        & \vdots \\
            \mZero & \mZero & \mZero & \hdots & \mZero \\
        \end{bmatrix}
    \end{align}
    
    for some large constant $C$ to decrease error from the softmax attending to the incorrect tokens. This produces (within a small error induced by using a softmax)
    
    \begin{align}
        (\mX^\top \mW_K^\top \mW_Q \mX)_{i,j} &= \vp_{n+i}^\top \vp_{n-s+j} \\ 
        \sigma_S(\mX^\top \mW_K^\top \mW_Q X)_{i,j} &= \bOne_{\{n+i = n-s+j\}} = \bOne_{\{i = j-s\}} \\
        \mW_V \mX &= \begin{bmatrix}
            \hdots & \vZero & \vZero & \hdots & \vZero & \vZero & \vZero & \hdots\\
            \hdots & 0 & 0 & \hdots & 0 & 0 & 0 & \hdots \\
            \hdots & \tilde{g}(\vx_1^{(j)}) & \tilde{g}(\vx_2^{(j)}) & \hdots & \tilde{g}(\vx_{n-2}^{(j)}) & \tilde{g}(\eq) & \tilde{g}(\vy^{(j)}) & \hdots \\
            & \vdots & \vdots &        & \vdots & \vdots & \vdots &  \\
            \hdots & \vZero & \vZero & \hdots & \vZero & \vZero & \vZero & \hdots \\
        \end{bmatrix}
    \end{align}
    
    \begin{align}
         \mW_V \mX \sigma_S(\mX^\top \mW_K^\top \mW_Q \mX) &= \begin{bmatrix}
            \hdots & \vZero & \vZero & \hdots & \vZero & \vZero & \vZero & \hdots\\
            \hdots & 0 & 0 & \hdots & 0 & 0 & 0 & \hdots \\
            \hdots & * & * & \hdots & * & * & \tilde{g}(\vx_i^{(j)}) & \hdots \\
            & \vdots & \vdots &        & \vdots & \vdots & \vdots &  \\
            \hdots & \vZero & \vZero & \hdots & \vZero & \vZero & \vZero & \hdots \\
        \end{bmatrix} \\
        \mX + \mW_V \mX \sigma_S(\mX^\top \mW_K^\top \mW_Q \mX) &= \begin{bmatrix}
            \hdots & \vx_1^{(j)} & \vx_2^{(j)} & \hdots & \vx_{n-1}^{(j)} & \eq & \vy^{(j)} & \hdots\\
            \hdots & 0 & 0 & \hdots & 0 & 0 & 1 & \hdots \\
            \hdots & * & * & \hdots & * & * & \tilde{g}(\vx_i^{(j)}) & \hdots\\
            \hdots & \vZero & \vZero & \hdots & \vZero & \vZero & \vZero & \hdots\\
            \hdots & 1 & 1 & \hdots & 1 & 1 & 1 & \hdots \\
            \hdots & \vp_{jn+1} & \vp_{jn+2} & \hdots & \vp_{jn+n-2} & \vp_{jn+n-1} & \vp_{jn+n} & \hdots \\
            \hdots & \vp_{jn+1-s} & \vp_{jn+2-s} & \hdots & \vp_{jn+n-2-s} & \vp_{jn+n-1-s} & \vp_{jn+n-s}& \hdots \\
        \end{bmatrix} \\
    \end{align}
    
    Each matrix above only shows the slice that contains the $j$-th in-context example. This is repeated for each of the other in-context examples.

    As a final step with an MLP, subtract row 1 from row 3 to achieve the following output:
    \begin{equation}
        \begin{bmatrix}
            \hdots & \vx_1^{(j)} & \vx_2^{(j)} & \hdots & \vx_{n-1}^{(j)} & \eq & \vy^{(j)} & \hdots\\
            \hdots & 0 & 0 & \hdots & 0 & 0 & 1 & \hdots \\
            \hdots & * & * & \hdots & * & * & \tilde{g}(\vx_i^{(j)}) - \vy^{(j)} & \hdots\\
            \hdots & \vZero & \vZero & \hdots & \vZero & \vZero & \vZero & \hdots\\
            \hdots & 1 & 1 & \hdots & 1 & 1 & 1 & \hdots \\
            \hdots & \vp_{jn+1} & \vp_{jn+2} & \hdots & \vp_{jn+n-2} & \vp_{jn+n-1} & \vp_{jn+n} & \hdots \\
            \hdots & \vp_{jn+1-s} & \vp_{jn+2-s} & \hdots & \vp_{jn+n-2-s} & \vp_{jn+n-1-s} & \vp_{jn+n-s}& \hdots \\
        \end{bmatrix} \\
    \end{equation}
    
\end{proof}

\paragraph{Copy Tasks} As has been experimentally investigated, the situation where a specific position within the context is copied as the label can be easily implemented by setting $g(\vx) = \vx$. The dependence on the subscript $i$ within the construction is what allows the position copied to vary.

\subsubsection{Identifying if Task's Output Matches the In-context Example}

\begin{lemma}\label{lem:context}
    A three layer transformer with ReLU MLPs and embedding dimension $\cO(d + \log(mn))$ can calculate the proportion of in context examples that come from a specific task, where $m$ is the number of in-context examples, each of length $n$ and dimension $d$.
\end{lemma}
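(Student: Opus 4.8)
The plan is to transform the output of Lemma~\ref{lem:generaltask}, which supplies for every in-context example $j$ and every task $k$ the (approximate) difference $\tilde{\vg}_k(\vx_i^{(j)}) - \vy^{(j)}$, into the proportion of examples matching each task, using three transformer layers. First I would use a ReLU MLP to aggregate each difference into its $\ell_1$ norm, writing $\norm{\tilde{\vg}_k(\vx^{(j)}) - \vy^{(j)}}_1 = \sum_r \left(\relu\big((\cdot)_r\big) + \relu\big(-(\cdot)_r\big)\right)$, which a single MLP computes exactly. In the same layer I would run the ``clean-up'' ReLUs described in the overview, fixing every coordinate not associated with a task to the value $1$, producing the vectors $\vv'_k$ with $(\vv'_k)_i = \norm{\vg_k(\vx^{(j)}) - \vx_l^{(j)}}_1$ on task coordinates and $1$ elsewhere.

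Next, since each aggregated quantity $x := \norm{\tilde{\vg}_k(\vx^{(j)}) - \vy^{(j)}}_1 \ge 0$, I would threshold it with the bump $\relu(1 - x/\delta)$: this outputs $\approx 1$ when $x \approx 0$ (example $j$ comes from task $k$) and exactly $0$ once $x \ge \delta$, realizing the task indicator $\bOne_{\{\norm{\vg_k(\vx^{(j)}) - \vx_l^{(j)}}_1 \approx 0\}}$ up to the approximation error inherited from Lemma~\ref{lem:generaltask}. Because $x$ is a norm it is bounded below by $0$, so no high-side clipping is needed and a single ReLU per task coordinate suffices.

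Finally, I would average these indicators over the $m$ examples with a single attention head. Using the label-marker row established in Lemma~\ref{lem:ident} (the row carrying a $1$ exactly at the $\vy^{(j)}$ columns) scaled by a large constant $C$ inside $\mW_K$, the query at the last column attends with nearly equal weight to all $m$ label positions and negligibly elsewhere, so the softmax weights approach $1/m$ on each label position; setting $\mW_V$ to copy the indicator coordinates then deposits $\tfrac{1}{m}\sum_{j=1}^m \bOne_{\{\text{example } j \text{ from task } k\}}$ into the last column, which is exactly the desired proportion. The embedding-dimension bookkeeping is routine: the $d$ coordinates hold the vectors and their norms, while the $\log(mn)$ term stores the binary positional encodings reused from the earlier lemmas.

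The main obstacle is this final averaging step, since softmax attention can only approximate a uniform average over a selected set of positions: one must argue that a large enough $C$ simultaneously drives the leakage onto non-label positions and the deviation from uniform $1/m$ weights below any prescribed tolerance, exactly as the large-$C$ device is used in Lemma~\ref{lem:generaltask}. A secondary subtlety is that the indicator is genuinely piecewise-linear rather than a sharp step, so the thresholding error and the softmax error accumulate; I would track them additively and absorb the total into the ``$\approx 0$'' appearing in the task-identifier definition, which is the sense in which the lemma computes the proportion.
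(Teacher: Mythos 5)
Your proposal follows essentially the same route as the paper's proof: a ReLU MLP to form the $\ell_1$ norm of each $\tilde{\vg}_k(\vx^{(j)})-\vy^{(j)}$, a clean-up step fixing non-label positions, a $\relu(1-Cz)$-style threshold producing soft task indicators at the $\vy^{(j)}$ columns, and a single attention head whose large-constant query/key pair attends near-uniformly to the label positions so that the value matrix deposits the $1/m$-average of indicators (the task proportion) into the final column. One small point in your favor: your $\ell_1$ formula $\sum_r\bigl(\relu((\cdot)_r)+\relu(-(\cdot)_r)\bigr)$ is the correct one, whereas the paper's displayed version writes a minus sign where a plus is needed.
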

\begin{proof}
We now have a matrix of the following form.

\begin{equation}
    \begin{bmatrix}
        \hdots & \vx_1^{(j)} & \vx_2^{(j)} & \hdots & \eq & \vy^{(j)} & \hdots\\
        \hdots & 0 & 0 & \hdots & 0 & 1 & \hdots \\
        \hdots & * & * & \hdots & * & f(\vx_\cdot^{(j)})-\vy^{(j)}  & \hdots\\
        \hdots & \vZero & \vZero & \hdots & \vZero & \vZero & \hdots\\
        \hdots & 1 & 1 & \hdots & 1 & 1 & \hdots \\
        \hdots & \vp_{jn+1} & \vp_{jn+2} & \hdots & \vp_{jn+n-1} & \vp_{jn+n} & \hdots \\
        \hdots & \vp_{jn+1-s} & \vp_{jn+2-s} & \hdots & \vp_{jn+n-1-s} & \vp_{jn+n-s}& \hdots \\
    \end{bmatrix} 
\end{equation}

If the task is correct, than $f(\vx_\cdot^{(j)})-\vy^{(j)} \approx \vZero$, with some small error coming from softmaxs and function approximation error. First, we find the $L1$-norm of $f(\vx_\cdot^{(j)})-\vy^{(j)}$ using an MLP. For calculating $\| \vz \|_1$ for arbitrary $\vz$, we can use

\begin{equation}
    \|\vz\|_1 = \sum_{i=1}^d \relu(\vz_i) - \relu(-\vz_i)
\end{equation}

which can be done in a single 1-layer MLP. Thus, we have
\begin{equation}
    \begin{bmatrix}
        \hdots & \vx_1^{(j)} & \vx_2^{(j)} & \hdots & \eq & \vy^{(j)} & \hdots\\
        \hdots & 0 & 0 & \hdots & 0 & 1 & \hdots \\
        \hdots & * & * & \hdots & * & f(\vx_\cdot^{(j)})-\vy^{(j)}  & \hdots\\
        \hdots & * & * & \hdots & * & \|f(\vx_\cdot^{(j)})-\vy^{(j)}\|_1  & \hdots\\
        \hdots & \vZero & \vZero & \hdots & \vZero & \vZero & \hdots\\
        \hdots & 1 & 1 & \hdots & 1 & 1 & \hdots \\
        \hdots & \vp_{jn+1} & \vp_{jn+2} & \hdots & \vp_{jn+n-1} & \vp_{jn+n} & \hdots \\
        \hdots & \vp_{jn+1-s} & \vp_{jn+2-s} & \hdots & \vp_{jn+n-1-s} & \vp_{jn+n-s}& \hdots \\
    \end{bmatrix} 
\end{equation}

Notice that if some task has different dimension than another task, the ``extra'' rows would be zero and will not affect the result. 

For clarity, we set all $*$ values in the $\| \cdot \|_1$ row to $1$s. These will cause the following $\hat{\delta}$ in the following set these to 0. This operation can be omitted as the construction handles these trash values at a later layer.

Let $b$ represent the value of the flag in the second row marking the $\vy$ vectors and let $x$ represent the values in the row with $\|f(\vx^{(j)})-\vy^{(j)}\|_1$. The following ReLUs set the * values to 1.

\begin{equation}
    x \xleftarrow{} x + 1 - \relu(x - Cb) - \relu(Cb - C + 1)
\end{equation}

for some large constant $C$. When $b=0$, this reduces to $x+1-x=1$, and when $b=1$, this reduces to $x+1-1=x$, as desired.

\begin{equation}
    \begin{bmatrix}
        \hdots & \vx_1^{(j)} & \vx_2^{(j)} & \hdots & \eq & \vy^{(j)} & \hdots\\
        \hdots & 0 & 0 & \hdots & 0 & 1 & \hdots \\
        \hdots & * & * & \hdots & * & f(\vx_\cdot^{(j)})-\vy^{(j)}  & \hdots\\
        \hdots & 1 & 1 & \hdots & 1 & \|f(\vx_\cdot^{(j)})-\vy^{(j)}\|_1  & \hdots\\
        \hdots & \vZero & \vZero & \hdots & \vZero & \vZero & \hdots\\
        \hdots & 1 & 1 & \hdots & 1 & 1 & \hdots \\
        \hdots & \vp_{jn+1} & \vp_{jn+2} & \hdots & \vp_{jn+n-1} & \vp_{jn+n} & \hdots \\
        \hdots & \vp_{jn+1-s} & \vp_{jn+2-s} & \hdots & \vp_{jn+n-1-s} & \vp_{jn+n-s}& \hdots \\
    \end{bmatrix} 
\end{equation}

Now define a thresholding function $\hat{\delta}(z)$ that satisfies $\hat{\delta}(0)=1$ and $\hat{\delta}(z)=0$ for $z >> 0$. One such function used here is

\begin{equation}
    \hat{\delta}_C(z) = \relu(1-Cz)
\end{equation}

for some constant $C$, where larger $C$ captures a narrower neighborhood of $0$. 

However, a slight change needs to be added to $\hat{\delta}_C$. In the same row as $\| f(\vx^{(j)}) - \vy^{(j)} \|$ are many values that need to be discarded. Let $b$ be the bit for the current column marking if the column contains an $\vx$ or a $\vy$. We use instead

\begin{equation}
    \hat{\delta}_C(b,z) = \relu(b-Cz)
\end{equation}

This will be zero whenever $b=0$ and $z \geq 0$. We then have as output

\begin{equation}
    \begin{bmatrix}
        \hdots & \vx_1^{(j)} & \vx_2^{(j)} & \hdots & \eq & \vy^{(j)} & \hdots\\
        \hdots & 0 & 0 & \hdots & 0 & 1 & \hdots \\
        \hdots & * & * & \hdots & * & f(\vx_\cdot^{(j)})-\vy^{(j)}  & \hdots\\
        \hdots & 1 & 1 & \hdots & 1 & \|f(\vx_\cdot^{(j)})-\vy^{(j)}\|_1  & \hdots\\
        \hdots & 0 & 0 & \hdots & 0 & \hat{\delta}_C(\|f(\vx_\cdot^{(j)})-\vy^{(j)}\|_1)  & \hdots\\
        \hdots & \vZero & \vZero & \hdots & \vZero & \vZero & \hdots\\
        \hdots & 1 & 1 & \hdots & 1 & 1 & \hdots \\
        \hdots & \vp_{jn+1} & \vp_{jn+2} & \hdots & \vp_{jn+n-1} & \vp_{jn+n} & \hdots \\
        \hdots & \vp_{jn+1-s} & \vp_{jn+2-s} & \hdots & \vp_{jn+n-1-s} & \vp_{jn+n-s}& \hdots \\
    \end{bmatrix} 
\end{equation}

Importantly, $\hat{\delta}_C(\|f(\vx_\cdot^{(j)})-\vy^{(j)}\|_1)=1$ when $f(\cdot)$ is the correct task and $\hat{\delta}_C(\|f(\vx_\cdot^{(j)})-\vy^{(j)}\|_1)=0$ when $f(\cdot)$ disagrees by more than $\frac{1}{C}$ in $L1$-norm.

Lastly, for the next step in the construction, we need to average these soft indicators $\hat{\delta}$ to see how common $f$ is within the context. This is done with an attention layer. Let $\mW_Q$ select the row with all $1$s multiplied by some large constant C, and let $\mW_K$ select the row with flags for results $\vy$. Then

\begin{align}
    \mX^\top \mW_K^\top \mW_Q \mX &= \begin{bmatrix}
        \vdots & \vdots & & \vdots & \vdots \\
        0 & 0 & \hdots & 0 & 0 \\
        0 & 0 & \hdots & 0 & 0 \\
        \vdots & \vdots & & \vdots & \vdots \\
        0 & 0 & \hdots & 0 & 0 \\
        C & C & \hdots & C & C \\
        \vdots & \vdots & & \vdots & \vdots \\
    \end{bmatrix}\\
    \sigma_S(\mX^\top \mW_K^\top \mW_Q \mX) &\approx \begin{bmatrix}
        \vdots & \vdots & & \vdots & \vdots \\
        0 & 0 & \hdots & 0 & 0 \\
        0 & 0 & \hdots & 0 & 0 \\
        \vdots & \vdots & & \vdots & \vdots \\
        0 & 0 & \hdots & 0 & 0 \\
        1/m & 1/m & \hdots & 1/m & 1/m \\
        \vdots & \vdots & & \vdots & \vdots \\
    \end{bmatrix}
\end{align}

where a $1/m$ will appear in every row corresponding to a result $\vy$. Let the value matrix select the row containing $\hat{\delta}(\| f(\vx^{(j)}-\vy^{(j)})\|_1)$. Denote $p=\frac{1}{m}\sum_{j=1}^m \hat{\delta}(\| f(\vx^{(j)}-\vy^{(j)})\|_1)$. Without causal masking, we would have as output

\begin{equation}
    \begin{bmatrix}
        \hdots & \vx_1^{(j)} & \vx_2^{(j)} & \hdots & \eq & \vy^{(j)} & \hdots\\
        \hdots & 0 & 0 & \hdots & 0 & 1 & \hdots \\
        \hdots & * & * & \hdots & * & f(\vx_\cdot^{(j)})-\vy^{(j)}  & \hdots\\
        \hdots & 1 & 1 & \hdots & 1 & \|f(\vx_\cdot^{(j)})-\vy^{(j)}\|_1  & \hdots\\
        \hdots & 0 & 0 & \hdots & 0 & \hat{\delta}_C(\|f(\vx_\cdot^{(j)})-\vy^{(j)}\|_1)  & \hdots\\
        \hdots & p & p & \hdots & p & p  & \hdots\\
        \hdots & \vZero & \vZero & \hdots & \vZero & \vZero & \hdots\\
        \hdots & 1 & 1 & \hdots & 1 & 1 & \hdots \\
        \hdots & \vp_{jn+1} & \vp_{jn+2} & \hdots & \vp_{jn+n-1} & \vp_{jn+n} & \hdots \\
        \hdots & \vp_{jn+1-s} & \vp_{jn+2-s} & \hdots & \vp_{jn+n-1-s} & \vp_{jn+n-s}& \hdots \\
    \end{bmatrix} 
\end{equation}

However, with causal masking, we can only guarantee that $p$ will appear in the columns containing the most recent example being queried. Thankfully, this is all that is needed. 
\end{proof}
\subsection{Task Execution}

\begin{lemma}\label{lem:execution}
    A two layer transformer, with embedding dimension $\cO(d + \log(mn))$ can perform a task and weight its output by the proportion of examples of that task seen within the context.
\end{lemma}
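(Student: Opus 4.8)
The plan is to start from the representation produced by Lemma~\ref{lem:context}, in which the query column already carries the scalar $p = \frac{1}{m}\sum_{j=1}^m \hat{\delta}_C(\|\tilde g(\vx^{(j)}) - \vy^{(j)}\|_1)$ measuring the fraction of in-context examples that agree with task $g$ (causal masking guarantees this average is deposited precisely in the column where the next-token prediction is formed). What remains is to (i) evaluate the task $g$ on the query input and place the result in that same column, and (ii) multiply the two so that the task's contribution to the output is scaled by $p$, exactly as required for superposition.

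For step (i) --- layer~1 --- I would reuse the mechanism of Lemma~\ref{lem:generaltask}: a single attention head whose query/key matrices read the positional-encoding rows shifts the task-relevant entry $\vx_i^{(m+1)}$ of the query into the prediction column, and the position-wise ReLU MLP then computes $\tilde g(\vx_i^{(m+1)})$ there via the sum-of-ReLUs approximation of Proposition~\ref{prop:relus} (with $|\tilde g - g|\le\epsilon$ on the compact input range). Since the MLP acts identically on every column, this in fact produces $\tilde g$ everywhere; the shift is only needed to make the query's value land in the column that emits the prediction. After this layer that column holds both $p$ and $\tilde g(\vx_{query})$ in separate rows.

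For step (ii) --- layer~2 --- the one genuinely new ingredient is a multiplication, which a transformer MLP (a sum of ReLUs) cannot perform directly. I would realize the scalar--vector product componentwise through the polarization identity $uv=\tfrac12\big((u+v)^2-u^2-v^2\big)$, approximating each univariate square $x\mapsto x^2$ on its bounded range by a sum of ReLUs (each square is $(R,C_l)$-smooth, so Proposition~\ref{prop:relus} applies). Feeding $u=p$ and each coordinate $v$ of $\tilde g(\vx_{query})$ through this gadget yields $p\cdot\tilde g(\vx_{query})$ in the prediction column. Running $K$ such heads in parallel, one per task, and summing their contributions in the residual stream then produces the superposed, proportion-weighted prediction $\sum_i p_i\,\tilde g_i(\vx_{query})$.

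The main obstacle is this weighting step: beyond the bookkeeping of co-locating $p$ and $\tilde g$ in the right column, the product must be synthesized from ReLUs, so I would track how the three sources of error --- the function approximation $|\tilde g-g|\le\epsilon$, the softmax/thresholding slack in $p$ inherited from Lemmas~\ref{lem:generaltask}--\ref{lem:context}, and the sum-of-ReLUs approximation of the squares --- propagate through $uv=\tfrac12\big((u+v)^2-u^2-v^2\big)$, and verify that they can be driven below any target tolerance by enlarging the constant $C$ and the ReLU budget $M$, at the stated cost $\cO(d+\log(mn))$ in width. I would also confirm that multidimensional tasks cause no trouble, since the unused coordinates were zeroed in the earlier lemmas and the product acts coordinatewise.
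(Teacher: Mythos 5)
Your proposal is correct, and its first half (re-running the mechanism of Lemma~\ref{lem:generaltask} on the query with shifted positional encodings so that $p$ and $\tilde g(\vx^{(m)})$ end up co-located in the prediction column) is exactly what the paper does. Where you genuinely diverge is the weighting step, which you correctly identify as the one new ingredient. The paper does \emph{not} synthesize the product $p\cdot f(\vx^{(m)})$ in the MLP; instead it uses an MLP to rewrite the $p$-row as $(\dots,-C,-C,\dots,p,\,1-p)$ and then lets the attention softmax itself act as the multiplier: with $\mW_K,\mW_Q$ selecting that row and the all-ones row, the softmax over the last token puts weight $\frac{1}{1+e^{1-2p}}$ on the column holding $f(\vx^{(m)})$, so the residual stream receives $\frac{1}{1+e^{1-2p}}f(\vx^{(m)})$ --- a sigmoid-shaped surrogate for $p\,f(\vx^{(m)})$ that is exact only near $p=\tfrac12$, which the authors justify by the empirically observed sigmoidal response to the mixture ratio. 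Your route instead builds the product inside the second layer's MLP via the polarization identity $uv=\tfrac12((u+v)^2-u^2-v^2)$ with each square approximated by a sum of ReLUs under Proposition~\ref{prop:relus}; the paper explicitly acknowledges this as a more accurate alternative that it omits for brevity. The trade-off is clear: your construction yields a genuinely proportional weighting with error controllable by the ReLU budget $M$ (at the cost of extra MLP width and a slightly different layer accounting, which you should verify still fits the two-layer claim), whereas the paper's softmax trick costs essentially nothing in width and matches the observed qualitative behavior, but only approximates proportionality. Both close the lemma; your error-propagation checklist is the right thing to carry out if one wants the quantitative version.
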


Now that the proportions of each task have been identified in the context, the task itself needs to be executed for the new example being queried. To simplify notation, let the input to this step be 

\begin{equation}
    \mX = \begin{bmatrix}
        \hdots & \vx_1^{(m)} & \vx_2^{(m)} & \hdots & \vx_{n-2}^{(m)} & \eq\\
        \hdots & * & * & \hdots & * & * \\
        \hdots & p & p & \hdots & p & p\\
        \hdots & \vZero & \vZero & \hdots & \vZero & \vZero\\
        \hdots & * & * & \hdots & * & * \\
        \hdots & 1 & 1 & \hdots & 1 & 1 \\
    \end{bmatrix} 
\end{equation}

Following the same process as outlined above, although with slightly different positional encodings, calculate $f(\vx^{(m)})$ and place that result in the final column being decoded. These need to be added at the beginning of the construction, but are only introduced here for clarity.

\begin{equation}
    \begin{bmatrix}
        \hdots & \vx_1^{(m)} & \vx_2^{(m)} & \hdots & \vx_{n-2}^{(m)} & \vx_{n-1}^{(m)}\\
        \hdots & * & * & \hdots & * & * \\
        \hdots & p & p & \hdots & p & p\\
        \hdots & * & * & \hdots & * & f(\vx^{(m)}) \\
        \hdots & \vZero & \vZero & \hdots & \vZero & \vZero\\
        \hdots & * & * & \hdots & * & * \\
        \hdots & 1 & 1 & \hdots & 1 & 1 \\
        \hdots & 0 & 0 & \hdots & 0 & 1 \\
        \hdots & 0 & 0 & \hdots & 1 & 0 \\
    \end{bmatrix} 
\end{equation}

We will transform the row containing all $p$ to be able to approximately multiply $p$ by $f(\vx^{(m)})$. Using the second to last row, perform $p \xrightarrow{} 1-p$. Using the last two rows, clear out the rest of that row and fill it with $-C$ for some large constant $C$. We then have 

\begin{equation}
    \begin{bmatrix}
        \hdots & \vx_1^{(m)} & \vx_2^{(m)} & \hdots & \vx_{n-2}^{(m)} & \vx_{n-1}^{(m)}\\
        \hdots & * & * & \hdots & * & * \\
        \hdots & -C & -C & \hdots & p & 1-p\\
        \hdots & * & * & \hdots & * & f(\vx^{(m)}) \\
        \hdots & \vZero & \vZero & \hdots & \vZero & \vZero\\
        \hdots & * & * & \hdots & * & * \\
        \hdots & 1 & 1 & \hdots & 1 & 1 \\
        \hdots & 0 & 0 & \hdots & 0 & 1 \\
        \hdots & 0 & 0 & \hdots & 1 & 0 \\
    \end{bmatrix} 
\end{equation}

Further, use the second-to-last row to clear out all $*$ in the rows below.

\begin{equation}
    \begin{bmatrix}
        \hdots & \vx_1^{(m)} & \vx_2^{(m)} & \hdots & \vx_{n-2}^{(m)} & \vx_{n-1}^{(m)}\\
        \hdots & * & * & \hdots & * & * \\
        \hdots & -C & -C & \hdots & p & 1-p\\
        \hdots & \vZero & \vZero & \hdots & \vZero & f(\vx^{(m)}) \\
        \hdots & \vZero & \vZero & \hdots & \vZero & \vZero\\
        \hdots & * & * & \hdots & * & * \\
        \hdots & 1 & 1 & \hdots & 1 & 1 \\
        \hdots & 0 & 0 & \hdots & 0 & 1 \\
        \hdots & 0 & 0 & \hdots & 1 & 0 \\
    \end{bmatrix} 
\end{equation}

These previous operations can all be done in a single MLP.

Lastly, use an attention layer where $\mW_K$ selects the row with the $-C$s, $\mW_Q$ selects the row with all $1$s, and $\mW_V$ selects the $f(\vx^{(m)})$. For the last token $\vx_L$,

\begin{align}
    \mX^\top \mW_K^\top \mW_Q \vx_L &= \begin{bmatrix}
        \vdots \\
        -C \\
        -C \\
        \vdots \\
        p \\
        1-p
    \end{bmatrix} [1] = \begin{bmatrix}
        \vdots \\
        -C \\
        -C \\
        \vdots \\
        p \\
        1-p
    \end{bmatrix} \\
    \sigma_S(\mX^\top \mW_K^\top \mW_Q \vx_L) &\approx \begin{bmatrix}
        \vdots \\
        -\infty \\
        -\infty \\
        \vdots \\
        p \\
        1-p
    \end{bmatrix} = \begin{bmatrix}
        \vdots \\
        0 \\
        0 \\
        \vdots \\
        \frac{1}{1+e^{1-2p}} \\
        1-\frac{1}{1+e^{1-2p}}
    \end{bmatrix} 
\end{align}

\begin{align}
    \mW_V \vx_L \sigma_S(\mX^\top \mW_K^\top \mW_Q \vx_L) &= \begin{bmatrix}
        \vdots \\
        \vZero \\ 
        \frac{1}{1+e^{1-2p}}\vZero + (1-\frac{1}{1+e^{1-2p}})f(\vx^{(m)}) \\
        \vZero \\
        \vdots
    \end{bmatrix} \\
    \vx_L + \mW_V \vx_L \sigma_S(\mX^\top \mW_K^\top \mW_Q \vx_L) &= \begin{bmatrix}
        \vx_{n-1}^{(m)} \\
        * \\
        1-p \\ 
        \frac{1}{1+e^{1-2p}}f(\vx^{(m)}) \\
        \vZero \\
        * \\
        1 \\
        1 \\
        0
    \end{bmatrix}
\end{align}

Importantly, we are left with $\frac{1}{1+e^{1-2p}}f(\vx^{(m)})$. The factor $\frac{1}{1+e^{1-2p}}$ is approximately $p$, especially around $\frac{1}{2}$. This multiplication can also be calculated more accurately with approximations using ReLUs or sigmoids, but for brevity and following experimental evidence of a sigmoid shape in task superpositions, these options are ommited.

\subsection{Superposed Tasks with Parallel Heads}
\label{subsec:superposed_tasks}
The above construction works for a single task, where the output is weighted by the proportions of the task within the context. To complete the construction of a transformer that does superposition of tasks, each of these models needs to be placed within the same overall transformer. This is described here.

Let there be a collection of tasks $\{t_i\}_{i=1}^T$ which can be executed by transformers with model weights represented by subscripts ($\cdot_i$). With the input to each transformer being $\mX^{(i)}$, the overall input matrix is given by vertically stacking these matrices.

\begin{equation}
    \mX = \begin{bmatrix}
        \mX_1 \\
        \mX_2 \\
        \vdots \\
        \mX_{T-1} \\
        \mX_{T} 
    \end{bmatrix}
\end{equation}

Similarly, define each MLP's weights and biases as

\begin{equation}
    \mW = \text{diag}(\mW_1, \hdots, \mW_T) \quad \vb = \begin{bmatrix}
        \vb_1 \\
        \vdots \\
        \vb_T
    \end{bmatrix}
\end{equation}

This puts every MLP to be independent of each other. Lastly, we need to change the attention layers. This requires the use of one head per task. In each of the following, $\mW^{(i)}$ is a weight matrix for head $i$, $(\mW)_i$ is the weight matrix for task $i$ in its individual transformer, and each matrix below is in the $i$-th block.

\begin{equation}
    \mW_V^{(i)} = \begin{bmatrix}
        \vdots \\
        \mZero \\
        (\mW_V)_i \\
        \mZero \\
        \vdots
    \end{bmatrix}^\top \quad \mW_K^{(i)} = \begin{bmatrix}
        \vdots \\
        \mZero \\
        (\mW_K)_i \\
        \mZero \\
        \vdots
    \end{bmatrix}^\top \quad \mW_Q^{(i)} = \begin{bmatrix}
        \vdots \\
        \mZero \\
        (\mW_Q)_i \\
        \mZero \\
        \vdots
    \end{bmatrix}^\top
\end{equation}

In all, this model executes multiple tasks in superposition by using parallel streams of heads that each performs a single task. Task identification can happen through the same mechanism as task execution by comparing the output of the task on each in context example with the true output.

For context related tasks, there needs to be positional encodings that allow for looking back a fixed number of tokens. For context agnostic tasks, a wide MLP can be used to approximate arbitrary non-linear transformations of the input. Each of these tasks only require a small number of layers, significantly smaller than those of modern LLMs. It may be possible that LLMs do certain tasks with different combinations of layers. 

Also, if we take the feature $p$ from each parallel stream, this creates the following task identifier.

\begin{equation}
    \vv = \begin{bmatrix}
        p_1 \\
        p_2 \\
        \vdots \\
        p_T
    \end{bmatrix}
\end{equation}

Interpolating between the pure tasks, represented by unit vectors, different amounts of each task will appear in the superposition in roughly equal proportions to those found in $\vv$.

Lastly, we restate this construction formally.

\setcounter{theorem}{0}
\begin{theorem}
    A seven layer transformer with embedding dimension $\cO(d + \log(mn))$ with $K$ heads per attention layer can perform $k$ tasks on vectors of dimension $d$ in superposition, with weighting based on $m$ different in-context examples each of length $n$ .
\end{theorem}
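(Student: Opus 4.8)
The plan is to assemble the four lemmas of this appendix into a single per-task ``identify-and-execute'' pipeline and then replicate that pipeline across $K$ parallel heads, one head per task, using the block-diagonal stacking of Section~\ref{subsec:superposed_tasks}. For a fixed task $i$ I would chain the modules in the following order. First, apply Lemma~\ref{lem:ident} (one layer) to shift the equals/label flag so that each label column $\vy^{(j)}$ is marked. Next, apply Lemma~\ref{lem:generaltask} (two layers), instantiating the sum-of-ReLUs approximation of Proposition~\ref{prop:relus} in the MLPs to evaluate the candidate function $\tilde{g}_i$ on the relevant context slot and subtract the observed label, producing the residual $\tilde{g}_i(\vx^{(j)}) - \vy^{(j)}$ in a dedicated row; the same layers also evaluate $\tilde{g}_i(\vx^{(m)})$ on the query column. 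Then apply Lemma~\ref{lem:context} (three layers): form $\|\vz\|_1 = \sum_t \relu(\vz_t) - \relu(-\vz_t)$ of the residual, threshold it with $\hat{\delta}_C$ into a soft indicator that is $\approx 1$ exactly on the in-context examples solved by task $i$, and average these indicators with an attention head to obtain the proportion $p_i$. Finally, apply Lemma~\ref{lem:execution} to multiply the query output $\tilde{g}_i(\vx^{(m)})$ by $p_i$ through the sigmoid-shaped attention weighting and deposit the weighted result in the decoding column.

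The parallelization step is then bookkeeping. I would stack the $T=K$ per-task embeddings $\mX = [\mX_1; \cdots; \mX_T]$, take block-diagonal MLP weights $\mW = \mathrm{diag}(\mW_1,\ldots,\mW_T)$ with stacked biases, and for each attention layer use $K$ heads whose $\mW_Q^{(i)}, \mW_K^{(i)}, \mW_V^{(i)}$ are supported only on the $i$-th block. Since the blocks never interact, head $i$ runs exactly the single-task pipeline above, so the model simultaneously computes every residual, every proportion $p_i$, and every weighted output. Collecting the proportions yields the task identifier $\vv = (p_1,\ldots,p_T)^\top$, and the decoding column ends up holding the superposition $\sum_i p_i\,\tilde{g}_i(\vx^{(m)})$ (up to the sigmoid approximation of the multiplication), which is the claimed behaviour. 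The shared data rows (width $d$) and the common positional encodings (width $\cO(\log(mn))$) are reused across streams, each task contributing only a constant number of scratch rows, so each stream stays within the per-task budget $\cO(d + \log(mn))$.

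The step I expect to be the main obstacle is the layer accounting, since summing the lemmas naively gives $1+2+3+2 = 8$, one more than the stated seven. The argument must show that the query-side evaluation of $\tilde{g}_i(\vx^{(m)})$ required by Lemma~\ref{lem:execution} costs no fresh layer: it uses the same function-implementation machinery as Lemma~\ref{lem:generaltask} and is carried along ``at the beginning of the construction'' in parallel with the context evaluations, so Lemma~\ref{lem:execution} contributes only its weighting step. I would therefore fold the query evaluation into the Lemma~\ref{lem:generaltask} layers, and merge the preparatory MLP of Lemma~\ref{lem:execution} ($p \mapsto 1-p$, clearing, filling with $-C$) into the feed-forward block of the preceding averaging layer, so that only one new attention layer remains, yielding $1+2+3+1 = 7$. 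Two secondary points to verify are that this row-sharing keeps the width at $\cO(d+\log(mn))$ rather than blowing up by a factor of $K$, and that causal masking is harmless: the averaging attention of Lemma~\ref{lem:context} guarantees $p_i$ only in the columns of the most recent (query) example, but that is precisely where decoding occurs, so no acausal information flow is needed.
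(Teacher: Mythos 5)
Your proposal follows essentially the same route as the paper's own proof, which simply chains Lemma~\ref{lem:ident}, Lemma~\ref{lem:generaltask}, Lemma~\ref{lem:context}, and Lemma~\ref{lem:execution} in that order and then parallelizes across $K$ heads with the block-diagonal stacking of Section~\ref{subsec:superposed_tasks}. Your treatment of the layer accounting ($1+2+3+2=8$ versus the stated seven, reconciled by folding the query-side evaluation of $\tilde{g}_i(\vx^{(m)})$ into the Lemma~\ref{lem:generaltask} layers and merging the preparatory MLP of Lemma~\ref{lem:execution} into the preceding feed-forward block) is in fact more careful than the paper's proof, which asserts the composition without tallying the layers.
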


\begin{proof}
    Using in succession each of Lemma \ref{lem:ident}, Lemma \ref{lem:generaltask}, Lemma \ref{lem:context}, and Lemma \ref{lem:execution}, a transformer with the desired properties can execute $k$ tasks in parallel. Lemma \ref{lem:ident} identifies positions within the context that contain the labels $\vy$. Lemma \ref{lem:generaltask} then uses function approximation to perform arbitrary tasks within the architecture, which are then used by \ref{lem:context} to find the proportions of each task and aggregate them into a single task identifier. Lastly, Lemma \ref{lem:execution} uses this task identifier to create a weighted sum of outputs from the different tasks based on their in-context proportions. 
\end{proof}

\begin{remark}
    Transformers of greater depth than seven layers can also represent this construction by setting the weights in all other layers for the non residual part to zero.
\end{remark}

\end{document}